
\documentclass{article}

\usepackage{multicol}
\usepackage{microtype}
\usepackage{graphicx}
\usepackage{subfigure}
\usepackage{booktabs} 
\usepackage{amssymb}
\usepackage{bbm}
\usepackage[fleqn]{amsmath}
\usepackage{amsthm}
\usepackage{enumerate}
\usepackage{color}

\DeclareMathOperator{\sgn}{sgn}

\usepackage{hyperref}



\usepackage[accepted]{icml2018}

\icmltitlerunning{Curriculum Learning by Transfer Learning}

\newtheorem{theorem}{Theorem}
\newtheorem{lemma}{Lemma}

\newtheorem{corollary}{Corollary}

\newtheorem{defn}{Definition}

\newcommand{\bx}{{\mathbf x}}
\newcommand{\bX}{{\mathbf X}}
\newcommand{\bz}{{\mathbf z}}
\newcommand{\bw}{{\mathbf w}}
\newcommand{\ba}{{\mathbf a}}

\newcommand{\bs}{{\mathbf s}}

\newcommand{\iR}{\mathbb{R}}
\newcommand{\cR}{\mathcal{R}}
\newcommand{\cH}{\mathcal{H}}
\newcommand{\iX}{\mathbb{X}}
\newcommand{\iE}{\mathbb{E}}

\newcommand{\iO}{\vec{\mathcal{O}}}
\newcommand{\OO}{{\mathcal{O}}}

\begin{document}

\twocolumn[
\icmltitle{Curriculum Learning by Transfer Learning: Theory and Experiments with Deep Networks}

\begin{icmlauthorlist}
\icmlauthor{Daphna Weinshall}{huji}
\icmlauthor{Gad Cohen}{huji}
\icmlauthor{Dan Amir}{huji}
\end{icmlauthorlist}

\icmlaffiliation{huji}{School of Computer Science and Engineering, The Hebrew University of Jerusalem, Jerusalem 91904, Israel}

\icmlcorrespondingauthor{Daphna Weinshall}{daphna@mail.huji.ac.il}

\icmlkeywords{Curriculum learning, transfer learning, deep networks}

\vskip 0.3in
]






\printAffiliationsAndNotice{}  

\begin{abstract}

We provide theoretical investigation of curriculum learning in the context of stochastic gradient descent when optimizing the convex linear regression loss. We prove that the rate of convergence of an ideal curriculum learning method is monotonically increasing with the difficulty of the examples. Moreover, among all equally difficult points, convergence is faster when using points which incur higher loss with respect to the current hypothesis. We then analyze curriculum learning in the context of training a CNN. We describe a method which infers the curriculum by way of transfer learning from another network, pre-trained on a different task. While this approach can only approximate the ideal curriculum, we observe empirically similar behavior to the one predicted by the theory, namely, a significant boost in convergence speed at the beginning of training. When the task is made more difficult, improvement in generalization performance is also observed. Finally, curriculum learning exhibits robustness against unfavorable conditions such as excessive regularization. 


\end{abstract}

\section{Introduction}
\label{sec:intro}

Biological organisms can learn to perform tasks (and often do) by observing a sequence of labeled events, just like supervised machine learning. But unlike machine learning, in human learning supervision is often accompanied by a curriculum. Thus the order of presented examples is rarely random when a human teacher teaches another human. Likewise, the task may be divided by the teacher into smaller sub-tasks, a process sometimes called shaping \cite{krueger2009flexible} and typically studied in the context of reinforcement learning \citep[e.g. ][]{graves2017automated}. Although it remained for the most part in the fringes of machine learning research, curriculum learning has been identified as a key challenge for machine learning throughout \citep[e.g., ][]{mitchell1980need,mitchell2006discipline,wang2015basic}.

We focus here on curriculum learning based on ranking (or weighting as in \cite{bengio2009curriculum}) of the training examples, which is used to guide the order of presentation of examples to the learner. Risking over simplification, the idea is to first present the learner primarily with examples of higher weight or rank, later to be followed by examples with lower weight or rank.  Ranking may be based on the difficulty of each training example as evaluated by the teacher, from easiest to the most difficult. 

In Section~\ref{sec:theory} we investigate this strict definition of curriculum learning theoretically, in the context of stochastic gradient descent used to optimize the convex linear regression loss function. We first define the (ideal) difficulty of a training point as its loss with respect to the optimal classifier. We then prove that curriculum learning, when given the ranking of training points by their difficulty thus defined, is expected (probabilistically) to significantly speed up learning especially at the beginning of training. This theoretical result is supported by empirical evidence obtained in the deep learning scenario of curriculum learning described in Section~\ref{sec:deep-learning}, where similar behavior is observed. We also show that when the difficulty of the sampled training points is fixed, convergence is faster when sampling points that incur higher loss with respect to the current hypothesis as suggested in \cite{ShrivastavaGG16}. This result is \emph{not} always true when the difficulty of the sampled training points is not fixed.

But such ideal ranking is rarely available. In fact, the need for such supervision has rendered curriculum learning less useful in machine learning, since ranking by difficulty is hard to obtain. Moreover, even when it is provided by a human teacher, it may not reflect the true difficulty as it affects the machine learner. For example, in visual object recognition it has been demonstrated that what makes an image difficult to a neural network classifier may not always match whatever makes it difficult to a human observer, an observation that has been taken advantage of in the recent work on adversarial examples \cite{szegedy2013intriguing}. Possibly, this is one of the reasons why curriculum learning is rarely used in practice \citep[but see, e.g.,][]{zaremba2014learning,amodei2016deep,jesson2017cased}.  

In the second part of this paper we focus on this question - how to rank (or weight) the training examples without the aid of a human teacher. This is paramount when a human teacher cannot provide a reliable difficulty score for the task at hand, or when obtaining such a score by human teachers is too costly. This question is also closely related to transfer learning: here we investigate the use of another classifier to provide the ranking of the training examples by their presumed difficulty. This form of transfer should not be confused with the notion of transfer discussed in \cite{bengio2009curriculum} in the context of multi-task and life-long learning \cite{thrun2012learning}, where knowledge is transferred from earlier tasks (e.g. the discrimination of easy examples) to later tasks (e.g. the discrimination of difficult examples). Rather, we investigate the transfer of knowledge from one classifier to another, as in \emph{teacher classifier} to \emph{student classifier}. In this form curriculum learning has not been studied in the context of deep learning, and hardly ever in the context of 
other classification paradigms. 

\comment{
There are a number of issues to be addressed in order to design a method for transfer curriculum learning. First, what determines the difficulty of an example? Second, which classifier should be used to compute the curriculum? Should it be a more powerful network pre-trained to do a different task \citep[e.g. ][]{szegedygoing}, or a smaller network that can be quickly trained on the final task? Should we use the same network to continuously rank the training set in order to achieve an adaptive curriculum \citep[as in \emph{self-paced learning:} ][]{kumar2010self,jiang2015self}? In Section~\ref{sec:deep-learning} we investigate the first two possibilities, leaving the incorporation of self-paced learning into the scheme for later work. We focus primarily on transfer from a big network pre-trained on a different task.} Differently from previous work, it is not the instance representation which is being transferred but rather the ranking of training examples. 
Why is this a good idea? This kind of transfer assumes that a powerful pre-trained network is only available at train time, and cannot be used at test time even for the computation of a test point's representation. This may be the case, for example, when the powerful network is too big to run on the target device. One can no longer expect to have access to the transferred representation at test time, while ranking can be used at train time in order to improve the learning of the target smaller network (see related discussion of network compression in \cite{chen2015compressing,kim2015compression}, for example). 

In Section~\ref{sec:deep-learning} we describe our method, an algorithm which uses the ranking to construct a schedule for the order of presentation of training examples. In subsequent empirical evaluations we compare the performance of the method when using a curriculum which is based on different scheduling options, including 2 control conditions where difficult examples are presented first or when using arbitrary scheduling. The main results of this empirical study can be summarized as follows: (i) Learning rate is always faster with curriculum learning, especially at the beginning of training. (ii) Final generalization is sometimes improved with curriculum learning, especially when the conditions for learning are hard: the task is difficult, the network is small, or 
when strong regularization is enforced. These results are consistent with prior art \citep[see e.g. ][]{bengio2009curriculum}.

\section{Theoretical analysis}
\label{sec:theory}

We start with some notations in Section~\ref{sec:notations}, followed in Sections~\ref{sec:regression} by the rigorous analysis of curriculum learning when used to optimize the linear regression loss. In Section~\ref{sec:theory-simulations} we report supporting empirical evidence for the main theoretical results, obtained using the deep learning setup described later in Section~\ref{sec:deep-learning}.

\subsection{Notations and definitions}
\label{sec:notations}

Let $\iX=\{(\bx_i,y_i)\}_{i=1}^n$  denote the training data, where $\bx_i\in\iR^d$ denotes the $i$-th data point and $y_i$ its corresponding label. In general, the goal is to find a hypothesis $\bar h(\bx)\in{\cal H}$ that minimizes the risk function (the expected loss). In order to minimize this objective, Stochastic Gradient Descent (SGD) is often used with various extensions and regularization. 

We start with two definitions:
\begin{defn}[Ideal Difficulty Score]
\label{def:diff_score}
The difficulty of point $\bx$ is measured by its minimal loss with respect to the set of optimal hypotheses $\{L(\bar h(\bx_i),y_i)\}$.
\end{defn}
\begin{defn}[Stochastic Curriculum Learning]
\label{def:SGL}
SCL is a variation on Stochastic Gradient Descent (SGD), where the learner is exposed to the data gradually based on the difficulty score of the training points. 
\end{defn}

In vanilla SGD training, at each iteration the learner is presented with a new datapoint (or mini-batch) sampled from the training data based on some probability function ${\cal D}(\iX)$. In SCL, the sampling is biased to favor easier examples at the beginning of the training. This bias is decreased following some scheduling procedure. At the end of training, points are sampled according to ${\cal D}(\iX)$ as in vanilla SGD. 

In practice, an SCL algorithm should solve two problems: (i) Score the training points by difficulty; in prior art this score was typically provided by the teacher in a supervised manner. (ii) Define the scheduling procedure. 

\subsection{The linear regression loss}
\label{sec:regression}

Here we analyze SCL when used to minimize the linear regression model. Specifically, we investigate the differential effect of a point's \emph{Difficulty Score} on convergence towards the global minimum of the expected least squares loss, when the family of hypotheses ${\cal H}$ includes the linear functions $h(\bx)=\ba^t\bx +b$ and $y\in\iR$.

The risk function of the regression model is the following
\begin{equation}
\begin{split}
\cR(\iX,\bw) &=\iE_{{\cal D}(\iX)}L(h_\bw(\bx),y) \\
L(h_\bw(\bx_i),y_i) &= (h(\bx_i)-y_i)^2 = (\ba^t\bx_i+b-y_i)^2 \\
&\triangleq (\bx_i^t\bw -y_i)^2  \triangleq L(\bX_i,\bw)
\end{split}
\label{eq:reg}
\end{equation}
In the last transition above, $\bw=${\tiny $\left [\begin{array}{c}\ba\\ b\end{array}\right ]$}$\in\iR^{d+1}$. With some abuse of notation, $\bx_i$ denotes the vector {\tiny $\left [\begin{array}{c}\bx_i\\ 1\end{array}\right ]$}. $\bX_i$ denotes the vector $[\bx_i,y_i]$, with \emph{Difficulty Score} $L(\bX_i,\bar\bw)$. 

In general the output hypothesis $h_\bw(\bx)=\bx_i^t\bw$ is determined by minimizing $\cR(\iX,\bw)$  with respect to $\bw$. The global minimum $\bar\bw$ of the empirical loss can be computed in closed form from the training data. However, gradient descent can be used to find $\bar\bw$ with guaranteed convergence, which is efficient when $n$ is very large.

Recall that SCL computes a sequence of estimators $\{\bw_t\}_{t=1}^T$ for the parameters of the optimal hypothesis $\bar\bw$. This is based on a sequence of training points $\{\bX_t=[\bx_t,y_t]\}_{t=1}^T$, sampled from the training data while favoring easy points at the beginning of training. Other than sampling probability, the update step at time $t$ follows SGD:
\begin{equation}
\label{eq:ranked-GD}
\bw_{t+1} = \bw_t - \eta \frac{\partial L(\bX_t,\bw)}{\partial \bw} \vert_{\bw=\bw_t}
\end{equation}

\subsubsection*{Convergence rate decreases with difficulty}

The main theorem in this sub-section states that the expected rate of convergence of gradient descent is monotonically \emph{decreasing} with the \emph{Difficulty Score} of the sample $\bX_t$. We prove it below for the gradient step as defined in (\ref{eq:ranked-GD}). If the size of the gradient step is fixed at $\eta$, a somewhat stronger theorem can be obtained where the constraint on the step size being small is not required.

We first derive the gradient step at time $t$:
\begin{equation}
\label{eq:gradient_t}
\bs = -\eta \frac{\partial L(\bX_i,\bw)}{\partial \bw} = -2\eta (\bx_i^t \bw - y_i)\bx_i 
\end{equation}

Let $\Omega_i$ denote the hyperplane on which this gradient vanishes $\frac{\partial L(\bX_i,\bw)}{\partial \bw}=0$. This hyperplane is defined by $\bx_i^t\bw = y_i$, namely, $\bx_i$ defines its normal direction. Thus (\ref{eq:gradient_t}) implies that the gradient step at time $t$ is perpendicular to $\Omega_i$\comment{ as illustrated in Fig.~\ref{fig:omega-plain}}. Let $\bar\bz$ denote the projection of $\bar\bw$ on $\Omega_i$. Let $\Psi^2=L(\bX_i,\bar\bw)$ denote the \emph{Difficulty Score} of $\bX_i$. 

\comment{\begin{figure}[th!]
	\centering
	\includegraphics[width=0.4\textwidth]{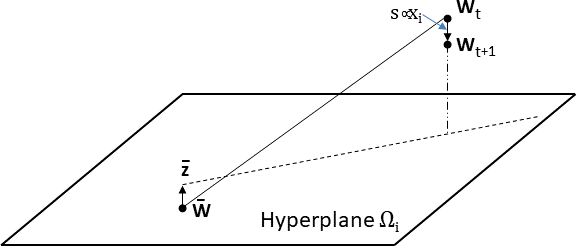}
    \caption{The geometry of the gradient step at time $t$. 
    \label{fig:omega-plain}}
\end{figure}}

\begin{lemma}
\label{lemma:1}
Fix the training point $\bX_i$. The \emph{Difficulty Score} of $\bX_i$ is
$\Psi^2= r^2 \Vert \bar\bw-\bar\bz \Vert^2$.
\end{lemma}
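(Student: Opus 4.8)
The plan is to reduce the statement to the elementary geometric fact that the perpendicular distance from a point to a hyperplane equals the absolute value of the residual divided by the norm of the hyperplane's normal. First I would unfold the definition of the \emph{Difficulty Score}, writing $\Psi^2 = L(\bX_i,\bar\bw) = (\bx_i^t\bar\bw - y_i)^2$, so that the whole claim becomes a statement about the single scalar residual $\bx_i^t\bar\bw - y_i$. The only other ingredient I need is the characterization of $\bar\bz$ as the orthogonal projection of $\bar\bw$ onto $\Omega_i = \{\bw : \bx_i^t\bw = y_i\}$, which gives both that $\bar\bz$ lies on the hyperplane (so $\bx_i^t\bar\bz = y_i$) and that the displacement $\bar\bw - \bar\bz$ is orthogonal to $\Omega_i$.

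The central algebraic step is to exploit that $\Omega_i$ has normal direction $\bx_i$, as already noted after (\ref{eq:gradient_t}). Since $\bar\bw - \bar\bz$ is perpendicular to $\Omega_i$, it is parallel to $\bx_i$, so I can write $\bar\bw - \bar\bz = \lambda\,\bx_i$ for some scalar $\lambda$. Projecting onto $\bx_i$ and using $\bx_i^t\bar\bz = y_i$ then yields $\bx_i^t\bar\bw - y_i = \bx_i^t(\bar\bw - \bar\bz) = \lambda\Vert\bx_i\Vert^2 = \lambda r^2$, where $r = \Vert\bx_i\Vert$ is the norm of the (augmented) data vector. On the other hand $\Vert\bar\bw - \bar\bz\Vert^2 = \lambda^2\Vert\bx_i\Vert^2 = \lambda^2 r^2$. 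Substituting back, $\Psi^2 = (\lambda r^2)^2 = r^2(\lambda^2 r^2) = r^2\Vert\bar\bw - \bar\bz\Vert^2$, which is exactly the claim. I would present these as a short chain of equalities inside a single display, being careful to keep the scalar $\lambda$ and the squared norm in step and not to split the math with a blank line.

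There is essentially no deep obstacle here; the lemma is a reformulation of ``residual $=$ distance $\times$ normal-length.'' The one point that needs a line of justification is the parallelism of $\bar\bw - \bar\bz$ to $\bx_i$, i.e.\ that the orthogonal projection moves $\bar\bw$ exactly along the normal. If I wanted to avoid invoking the projection geometry, an alternative is to write $\bar\bz = \bar\bw - \frac{\bx_i^t\bar\bw - y_i}{\Vert\bx_i\Vert^2}\bx_i$ as the explicit closed form of the projection, verify directly that $\bx_i^t\bar\bz = y_i$, and then compute $\Vert\bar\bw - \bar\bz\Vert$ in one step; squaring removes the sign ambiguity and gives the result immediately.
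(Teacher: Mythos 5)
Your proof is correct and takes essentially the same route as the paper's: both decompose $\bar\bw$ through the projection point $\bar\bz$, use $\bx_i^t\bar\bz=y_i$ to reduce the residual to $\bx_i^t(\bar\bw-\bar\bz)$, and exploit that $\bar\bw-\bar\bz$ is parallel to the normal $\bx_i$ to turn the inner product into $\Vert\bx_i\Vert\,\Vert\bar\bw-\bar\bz\Vert$ (your explicit parameterization $\bar\bw-\bar\bz=\lambda\bx_i$ is just the equality case of Cauchy--Schwarz made concrete). One cosmetic caution: avoid the symbol $\lambda$ for that scalar, since the paper reserves $\lambda=\Vert\bar\bw-\bw_t\Vert$ immediately after this lemma.
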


\begin{proof}
\begin{equation}
\begin{split}
\label{eq:diff-score}
\Psi^2 &=L(\bX_i,\bar\bw) = L(\bX_i,\bar\bz + (\bar\bw-\bar\bz))  \\
&= [ \bx_i^t\bar\bz + \bx_i^t (\bar\bw-\bar\bz) - y_i]^2 \\
&= [ \bx_i^t (\bar\bw-\bar\bz)]^2 = \Vert \bx_i \Vert^2 \Vert \bar\bw-\bar\bz \Vert^2 
\end{split}
\end{equation}
\comment{ 
The first transition in the last line follows from $\bar\bz\in\Omega_i \implies \bx_i^t\bar\bz- y_i=0$. The second transition follows from the fact that both $\bx_i$ and $(\bar\bw-\bar\bz)$ are perpendicular to $\Omega_i$, and therefore parallel to each other.}
\widowpenalty=10000
\end{proof}

Recall that $\bx_i,\bw\in\iR^{d+1}$. We continue the analysis in the parameter space $\bw\in\iR^{d+1}$, where parameter vector $\bw$ corresponds to a point, and data vector $\bx_i$ describes a hyperplane. In this space we represent each vector $\bx_i$ in a hyperspherical coordinate system $[r,\vartheta,\Phi]$, with pole (origin) fixed at $\bar\bw$ and polar axis (zenith direction) $\iO = \bar\bw-\bw_t$ (see Fig.~\ref{fig:obtuse}). $r$ denotes the vector's length, while $0\le\vartheta\le \pi$ denotes the polar angle with respect to $\iO$. Let $\Phi=[\varphi_1\ldots,\varphi_{d-1}]$ denote the remaining polar angles.

To illustrate, Fig.~\ref{fig:obtuse} shows a planar section of the parameter space, the $2D$ plane formed by the two intersecting lines $\iO$ and $\bar\bz-\bar\bw$. The gradient step $\bs$ points from $\bw_t$ towards $\Omega_i$. $\Omega_i$ is perpendicular to $\bx_i$, which is parallel to $\bar\bz-\bar\bw$ and to $\bs$, and therefore $\Omega_i$ is projected onto a line in this plane. We introduce the notation $\lambda=\Vert\bar\bw-\bw_t\Vert$. 

\begin{figure}[ht!]
	\centering
	\includegraphics[width=0.45\textwidth]{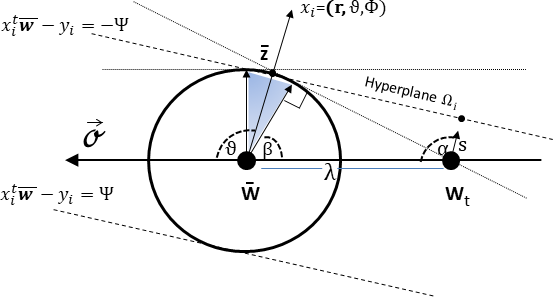}
    \caption{The $2D$ planar section defined by the vectors $\iO=\bar\bw-\bw_t$ and $\bar\bz-\bar\bw$. The circle centered on $\bar\bw$ has radius $\Vert\bar\bw-\bar\bz\Vert=\frac{{\Psi}}{\Vert\bx_i\Vert}$ from Lemma~\ref{lemma:1}. It traces the location of $\bar\bz$ for all the points $\bx_i$ with the same length $r$ and the same difficulty score $\Psi$. 
    \label{fig:obtuse}}
\end{figure}

Let $\bs_{\OO}$ denote the projection of the gradient vector $\bs$ on the polar axis $\iO$, and let $\bs_{\perp}$ denote the perpendicular component. From (\ref{eq:gradient_t}) and the definition of $\Psi$
\begin{equation}
\label{eq:gradient_full}
\begin{split}
&\bs = -2\eta\bx_i (\bx_i^t \bw_t - y_i) = -2\eta\bx_i[\bx_i^t (\bw_t-\bar\bw)\pm{\Psi}] \\
&\bs_{\OO}=\bs\cdot\frac{\bar\bw-\bw_t}{\lambda}=2\frac{\eta}{\lambda}[r^2\lambda^2\cos^2\vartheta\mp{\Psi}r\lambda\cos\vartheta]
\end{split}
\end{equation}

Let $\bx=(r,\vartheta,\Phi)$. Let $f_{{\cal D}(\iX)} = f(r,\vartheta,\Phi)f_Y(\vert y-\bx^t\bar\bw\vert)$ denote the density function of the data $\iX$. This choice assumes 
that the density of the label $y$ only depends on the absolute error $\vert y-\bx^t\bar\bw\vert$.

For the subsequent derivations we need the conditional distribution of the data $\iX$ given difficulty score $\Psi$. Fixing the difficulty score determines one of two labels $y(\bx,\Psi)=\bx^t\bar\bw\pm{\Psi}$. We further assume that both labels are equally likely\footnote{This assumption can be somewhat relaxed, but the strict form is used to simplify the exposition.}, and therefore $f_{{\cal D}(\iX)/_{\displaystyle{\Psi}}}([\bx,y]) = \frac{1}{2}f(r,\vartheta,\Phi)$. 

Let $\Delta(\Psi)$ denote the expected convergence rate at time $t$, given fixed difficulty score $\Psi$. 
\begin{equation}
\label{eq:delta}
\Delta(\Psi) = \iE[\Vert \bw_{t} - \bar\bw \Vert^2 - \Vert \bw_{t+1} - \bar\bw \Vert^2 /_{\displaystyle{\Psi}}]
\end{equation}

\begin{lemma}
\label{lemma:2}
\begin{equation}
\label{eq:Delta}
\Delta(\Psi) = 2\lambda\iE[\bs_{\OO}/_{\displaystyle{\Psi}}]-\iE[\bs^2/_{\displaystyle{\Psi}}]
\end{equation}
\end{lemma}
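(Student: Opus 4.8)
The plan is to obtain (\ref{eq:Delta}) by a direct algebraic expansion of the two squared distances, combined with the observation that the current iterate $\bw_t$ — and hence the scalar $\lambda=\Vert\bar\bw-\bw_t\Vert$ — is fixed given the history of the optimization, so that the only randomness inside the conditional expectation comes from the freshly sampled point $\bX_t$. First I would write the update (\ref{eq:ranked-GD}) as $\bw_{t+1}=\bw_t+\bs$, with $\bs$ the gradient step of (\ref{eq:gradient_t}). Subtracting $\bar\bw$ and expanding gives
\begin{equation*}
\Vert\bw_{t+1}-\bar\bw\Vert^2 = \Vert\bw_t-\bar\bw\Vert^2 + 2\,\bs\cdot(\bw_t-\bar\bw) + \bs^2,
\end{equation*}
so that the per-step decrease in distance to the optimum is
\begin{equation*}
\Vert\bw_t-\bar\bw\Vert^2-\Vert\bw_{t+1}-\bar\bw\Vert^2 = -2\,\bs\cdot(\bw_t-\bar\bw)-\bs^2.
\end{equation*}

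Next I would rewrite the inner product in terms of the projection onto the polar axis $\iO=\bar\bw-\bw_t$. Since $\bw_t-\bar\bw=-\iO$ and $\iO$ has length $\lambda$, we have
\begin{equation*}
-\bs\cdot(\bw_t-\bar\bw) = \bs\cdot(\bar\bw-\bw_t) = \lambda\,\Bigl(\bs\cdot\tfrac{\bar\bw-\bw_t}{\lambda}\Bigr) = \lambda\,\bs_{\OO},
\end{equation*}
using the definition of $\bs_{\OO}$ from (\ref{eq:gradient_full}). Substituting, the per-step decrease equals $2\lambda\,\bs_{\OO}-\bs^2$.

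Finally I would take the expectation conditioned on the difficulty score $\Psi$. Because $\bw_t$ is determined by the past and is independent of the newly drawn sample $\bX_t$, the factor $\lambda$ is deterministic under this conditioning and can be pulled outside the expectation, yielding $\Delta(\Psi)=2\lambda\,\iE[\bs_{\OO}/_{\displaystyle{\Psi}}]-\iE[\bs^2/_{\displaystyle{\Psi}}]$ as claimed. I do not anticipate a genuine obstacle here: the argument is a routine identity for the expected one-step progress of SGD, and the only points requiring care are the sign bookkeeping in the projection step and the justification that $\lambda$ may be treated as constant and factored out of the conditional expectation.
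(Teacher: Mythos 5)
Your proposal is correct and is essentially the paper's own argument: both derive the standard one-step progress identity $\Vert\bw_t-\bar\bw\Vert^2-\Vert\bw_{t+1}-\bar\bw\Vert^2 = 2\lambda\,\bs_{\OO}-\bs^2$ and then take the conditional expectation with $\lambda$ treated as deterministic. The only (cosmetic) difference is that the paper expands in the orthogonal frame, writing $\Vert\bw_{t+1}-\bar\bw\Vert^2=(\lambda-\bs_{\OO})^2+\bs_{\perp}^2$ and using $\bs_{\OO}^2+\bs_{\perp}^2=\bs^2$, whereas you expand the squared norm of $\bw_t+\bs-\bar\bw$ directly and identify the inner product with $\lambda\,\bs_{\OO}$.
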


\begin{proof}
From (\ref{eq:delta})
\begin{align*}
\iE[\Delta] &= \lambda^2-\iE[(\lambda-\bs_{\OO})^2+\bs_{\perp}^2] \\
&= \lambda^2-(\lambda^2 -2\lambda\iE[\bs_{\OO}]+\iE[\bs_{\OO}^2]) - \iE[\bs_{\perp}^2]  \\
&= 2\lambda\iE[\bs_{\OO}]-\iE[\bs^2]
\end{align*}
\widowpenalty=10000
\end{proof}

From Lemma~\ref{lemma:2} and (\ref{eq:gradient_full})\footnote{Below the short-hand notation $\iE[(\pm{\Psi})]$ implies that the 2 cases of $y(\bx,\Psi)=\bx^t\bar\bw\pm\Psi$ should be considered, with equal probability $\frac{1}{2}$ by assumption.}
\begin{align}
 \frac{1}{4}\Delta(\Psi) &= \eta\iE[r^2\lambda^2\cos^2\vartheta]-\eta^2\iE[r^4\lambda^2\cos^2\vartheta] \nonumber \\
\label{eq:e-delta}
-&\eta^2\Psi^2\iE[r^2] \\
-& \eta\iE[(\pm{\Psi})r\lambda\cos\vartheta] - 2\eta^2\iE[(\pm{\Psi})r^3\lambda\cos\vartheta] \nonumber
\end{align}

\begin{lemma}
\label{lemma:3}
\begin{equation*}
\iE[(\pm{\Psi})r\lambda\cos\vartheta] =\iE[(\pm{\Psi})r^3\lambda\cos\vartheta] = 0
\end{equation*}
\end{lemma}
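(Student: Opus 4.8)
The plan is to exploit the sign symmetry encoded in the $(\pm{\Psi})$ notation. Both quantities to be evaluated share the common form $\iE[(\pm{\Psi})\,h(r,\vartheta,\Phi)]$, with $h(r,\vartheta,\Phi)=r\lambda\cos\vartheta$ in the first case and $h(r,\vartheta,\Phi)=r^3\lambda\cos\vartheta$ in the second. The crucial observation is that $h$ depends only on the spatial coordinates of the sampled vector $\bx$ (with $\lambda=\Vert\bar\bw-\bw_t\Vert$ held fixed by the current iterate $\bw_t$), and is entirely independent of which of the two admissible labels $y(\bx,{\Psi})=\bx^t\bar\bw\pm{\Psi}$ was selected. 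Indeed, $r=\Vert\bx\Vert$ and the polar angle $\vartheta$ are properties of the data vector $\bx$ alone, whereas the sign in $\pm{\Psi}$ pertains to the label $y$.

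First I would recall the conditional density established earlier in the excerpt: given the fixed difficulty score ${\Psi}$, the two labels $y=\bx^t\bar\bw+{\Psi}$ and $y=\bx^t\bar\bw-{\Psi}$ occur with equal probability $\tfrac{1}{2}$, while the spatial density factor is identical in both cases, namely $f_{{\cal D}(\iX)/_{\displaystyle{\Psi}}}([\bx,y])=\tfrac{1}{2}f(r,\vartheta,\Phi)$. I would then split each expectation into its two sign cases and factor out the common $h$. At each fixed spatial location $(r,\vartheta,\Phi)$ the two cases contribute $\tfrac{1}{2}(+{\Psi})\,h$ and $\tfrac{1}{2}(-{\Psi})\,h$, whose sum is exactly $0$ \emph{before} any integration over $(r,\vartheta,\Phi)$ is carried out. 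Integrating this identically vanishing integrand against $f(r,\vartheta,\Phi)$ then yields $0$ for both expectations at once.

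There is no genuinely hard step here; the result is a parity (odd-symmetry) argument rather than a computation. The only point requiring care is to confirm that the factor multiplying $(\pm{\Psi})$ carries no hidden dependence on the sign of the label — in particular that $\lambda$, $r$, and $\vartheta$ are all determined by $\bw_t$ and the spatial position of $\bx$, and not by the choice between $\bx^t\bar\bw\pm{\Psi}$ — so that the equal-probability assumption delivers an exact cancellation. This is precisely the place where the earlier stated assumption that both labels are equally likely is invoked; were the two signs weighted unequally, the cancellation would fail and a nonzero residual proportional to ${\Psi}\,\iE[h]$ would survive.
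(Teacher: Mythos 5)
Your proof is correct and is essentially the same as the paper's: the paper's one-line proof invokes ``the assumed symmetry of ${\cal D}(\iX)$ with respect to the sign of $y_i-\bx_i^t\bar\bw$,'' which is precisely the equal-probability cancellation over the two labels $y=\bx^t\bar\bw\pm\Psi$ that you spell out. Your version merely makes explicit the (correct) observation that $r$, $\vartheta$, and $\lambda$ are independent of the label sign, so the pointwise cancellation goes through.
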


\begin{proof}
The lemma follows from the assumed symmetry of ${\cal D}(\iX)$ with respect to the sign of $y_i-\bx_i^t\bar\bw$.
\end{proof}

It follows from Lemma~\ref{lemma:3} that
\begin{equation}
\label{cor:3}
\begin{split}
\frac{1}{4}\Delta(\Psi) =& \eta\iE[r^2\lambda^2\cos^2\vartheta]-\eta^2\iE[r^4\lambda^2\cos^2\vartheta] \\
&-\eta^2\Psi^2\iE[r^2] 
\end{split}
\end{equation}

We can now state the main theorem of this section.
\begin{theorem}
\label{theorem:1}
At time $t$ the expected convergence rate for training point $\bx$ is monotonically decreasing with the Difficulty Score $\Psi$ of $\bx$. If the step size coefficient is sufficiently small so that $\eta\le\frac{\iE[r^2\cos^2\vartheta]}{\iE[r^4\cos^2\vartheta]}$, it is likewise monotonically increasing with the distance $\lambda$ between the current estimate of the hypothesis $\bw_t$ and the optimal hypothesis $\bar\bw$.
\end{theorem}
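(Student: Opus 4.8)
The plan is to read off both monotonicity claims directly from the closed form for the expected convergence rate already derived in (\ref{cor:3}), so that essentially no new computation is required. Collecting the terms that carry the dependence on $\lambda$, that identity reads
\begin{equation*}
\frac{1}{4}\Delta(\Psi) = \lambda^2\bigl(\eta\iE[r^2\cos^2\vartheta]-\eta^2\iE[r^4\cos^2\vartheta]\bigr) - \eta^2\Psi^2\iE[r^2],
\end{equation*}
which isolates $\Psi$ in a single (negative) term and confines the $\lambda$-dependence to the factor $\lambda^2$ with a fixed coefficient. Because conditioning on $\Psi$ fixes only the label via $y(\bx,\Psi)=\bx^t\bar\bw\pm\Psi$ and leaves the spatial density $f(r,\vartheta,\Phi)$ untouched (this is exactly the content of the conditional density $\tfrac12 f(r,\vartheta,\Phi)$ used earlier), the three expectations $\iE[r^2\cos^2\vartheta]$, $\iE[r^4\cos^2\vartheta]$ and $\iE[r^2]$ do not depend on $\Psi$ and may be treated as nonnegative constants.

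For the first claim I would simply differentiate with respect to $\Psi$. Since $\Psi$ occurs only in $-\eta^2\Psi^2\iE[r^2]$, we obtain $\frac{\partial}{\partial\Psi}\bigl(\tfrac14\Delta\bigr)=-2\eta^2\Psi\iE[r^2]\le 0$ for all $\Psi\ge 0$, strictly negative once $\Psi>0$ and $\iE[r^2]>0$. Thus $\Delta(\Psi)$ decreases monotonically in the difficulty score, and, as the theorem asserts, this conclusion needs no restriction on the step size $\eta$.

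For the second claim I would regard $\Delta$ as a function of $\lambda\ge 0$, whose entire $\lambda$-dependence sits in $\lambda^2$ times the coefficient $c:=\eta\iE[r^2\cos^2\vartheta]-\eta^2\iE[r^4\cos^2\vartheta]=\eta\bigl(\iE[r^2\cos^2\vartheta]-\eta\iE[r^4\cos^2\vartheta]\bigr)$. The hypothesis $\eta\le\iE[r^2\cos^2\vartheta]/\iE[r^4\cos^2\vartheta]$ is precisely the statement that the bracketed quantity, hence $c$, is nonnegative; since $\lambda^2$ is increasing on $[0,\infty)$, a nonnegative $c$ forces $\Delta$ to increase monotonically in $\lambda$, as required.

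The only genuine care, which I would flag as the main (minor) obstacle, is the role of the reference axis $\iO=\bar\bw-\bw_t$ in the definition of the polar angle $\vartheta$. The $\lambda$-monotonicity statement compares values of $\Delta$ across different current estimates $\bw_t$, and strictly speaking varying $\bw_t$ can change the axis direction and therefore the expectations $\iE[r^2\cos^2\vartheta]$ and $\iE[r^4\cos^2\vartheta]$; the clean reading of the theorem holds these constants fixed (equivalently, varies only the magnitude $\lambda$ along a fixed direction, or assumes the data density is isotropic enough that these moments are direction-free). Once this is granted, together with (\ref{cor:3}) and Lemma~\ref{lemma:3}, the result is immediate and the step-size bound enters solely to guarantee $c\ge 0$.
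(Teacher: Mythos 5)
Your proof is correct and follows essentially the same route as the paper: both read the two monotonicity claims directly off the identity in (\ref{cor:3}), obtaining $\frac{\partial\Delta}{\partial\Psi}=-8\eta^2\Psi\iE[r^2]\le 0$ unconditionally and $\frac{\partial\Delta}{\partial\lambda}=8\eta\lambda\left(\iE[r^2\cos^2\vartheta]-\eta\iE[r^4\cos^2\vartheta]\right)\ge 0$ precisely under the stated step-size bound. Your closing caveat --- that the moments $\iE[r^2\cos^2\vartheta]$ and $\iE[r^4\cos^2\vartheta]$ are tied to the polar axis $\iO=\bar\bw-\bw_t$ and so are held fixed when varying $\lambda$ --- is a fair point of rigor that the paper leaves implicit, but it does not change the argument.
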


\begin{proof}
From (\ref{cor:3})
\begin{equation*}
\frac{\partial\Delta(\Psi)}{\partial\Psi} = -8\eta^2\iE[r^2]\Psi \le 0
\end{equation*}
which proves the first statement. In addition,
\begin{equation*}
\frac{\partial\Delta(\Psi)}{\partial\lambda} = 8\eta\lambda \left ( \iE[r^2\cos^2\vartheta]-\eta\iE[r^4\cos^2\vartheta] \right )
\end{equation*}
If $\eta\le\frac{\iE[r^2\cos^2\vartheta]}{\iE[r^4\cos^2\vartheta]}$ then $\frac{\partial\Delta(\Psi)}{\partial\lambda}\ge 0$, and the second statement follows.
\widowpenalty=10000
\end{proof}

\begin{corollary}
\label{cor:1}
Although $\iE[\Delta(\Psi)]$ may be negative, $\bw_t$ always converges faster to $\bar\bw$ when the training points are sampled from easier examples with smaller $\Psi$. 
\end{corollary}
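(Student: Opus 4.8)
The plan is to obtain the corollary directly from the first statement of Theorem~\ref{theorem:1}, but to carry it out in a way that squarely addresses the parenthetical caveat that $\iE[\Delta(\Psi)]$ may itself be negative. First I would rewrite the expression for $\Delta(\Psi)$ in (\ref{cor:3}) in the separated form $\Delta(\Psi)=A-B\Psi^2$, where $A=4\eta\lambda^2\iE[r^2\cos^2\vartheta]-4\eta^2\lambda^2\iE[r^4\cos^2\vartheta]$ gathers every term that does not depend on the difficulty score (note that $\lambda=\Vert\bar\bw-\bw_t\Vert$ is a constant relative to the expectation), and $B=4\eta^2\iE[r^2]$ isolates the single channel through which $\Psi$ enters the one-step convergence rate.

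The key observation I would then make is that $B>0$ whenever the data are non-degenerate, so that $\iE[r^2]>0$, and crucially this positivity holds irrespective of the sign of $A$. Consequently, for any pair of difficulty scores $\Psi_1<\Psi_2$ I get $\Delta(\Psi_1)-\Delta(\Psi_2)=B(\Psi_2^2-\Psi_1^2)>0$, so the easier point yields a strictly larger expected reduction $\iE[\Vert\bw_t-\bar\bw\Vert^2-\Vert\bw_{t+1}-\bar\bw\Vert^2]$ in squared distance to the optimum. Since a larger value of $\Delta$ means that $\bw_{t+1}$ lands, in expectation, closer to $\bar\bw$, sampling the easier point always produces relatively faster convergence at step $t$.

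The subtle point, which I expect to be the crux of the argument rather than any calculation, is to make explicit that this comparison remains valid even when $A<0$. When the step size violates the bound $\eta\le\iE[r^2\cos^2\vartheta]/\iE[r^4\cos^2\vartheta]$ from Theorem~\ref{theorem:1}, the constant $A$ can be negative and hence $\Delta(\Psi)$ can be negative for every admissible $\Psi$, so a single SGD step may on average move $\bw_t$ \emph{away} from $\bar\bw$. The point I would emphasize is that the $-B\Psi^2$ term shifts $\Delta$ by the same additive amount whatever the sign of $A$, so the ordering $\Delta(\Psi_1)>\Delta(\Psi_2)$ is driven entirely by the difficulty term and survives intact. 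Thus ``converges faster'' must be read in the relative sense: among the available points the easier one is always the better choice at step $t$, even in the regime where no choice guarantees absolute progress in that single step.
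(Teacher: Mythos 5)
Your proposal is correct and follows essentially the same route as the paper: the paper derives this corollary directly from the first statement of Theorem~\ref{theorem:1}, whose proof shows $\frac{\partial\Delta(\Psi)}{\partial\Psi} = -8\eta^2\iE[r^2]\Psi \le 0$ from (\ref{cor:3}) --- exactly your observation that $\Psi$ enters $\Delta$ only through the term $-4\eta^2\iE[r^2]\Psi^2$, which you merely phrase as a finite difference $\Delta(\Psi_1)-\Delta(\Psi_2)=B(\Psi_2^2-\Psi_1^2)$ rather than a derivative. Your explicit emphasis that this ordering is independent of the sign of the $\Psi$-free terms (and hence of the step-size condition, which the paper needs only for monotonicity in $\lambda$) is precisely the intended content of the corollary's caveat.
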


\begin{corollary}
\label{cor:2}
If the step size coefficient $\eta$ is small enough so that $\eta\le\frac{\iE[r^2\cos^2\vartheta]}{\iE[r^4\cos^2\vartheta]}$, we should expect faster convergence at the beginning of SCL. 
\end{corollary}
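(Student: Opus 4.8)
The plan is to obtain Corollary~\ref{cor:2} as an immediate consequence of the second statement of Theorem~\ref{theorem:1}, combined with the elementary observation that the distance $\lambda=\Vert\bar\bw-\bw_t\Vert$ between the running estimate and the optimum is largest at the start of training. The step-size hypothesis $\eta\le\frac{\iE[r^2\cos^2\vartheta]}{\iE[r^4\cos^2\vartheta]}$ appearing in the corollary is precisely the condition under which $\Delta(\Psi)$ was shown to be increasing in $\lambda$, so the two should be linked directly rather than re-derived.

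First I would rewrite the convergence rate from~(\ref{cor:3}), pulling the deterministic factor $\lambda^2$ outside the data expectation, as
\begin{equation*}
\frac{1}{4}\Delta(\Psi) = \lambda^2\left(\eta\iE[r^2\cos^2\vartheta] - \eta^2\iE[r^4\cos^2\vartheta]\right) - \eta^2\Psi^2\iE[r^2],
\end{equation*}
and note that only the first term depends on $\lambda$, while the penalty term in $\Psi$ does not. Under the stated bound on $\eta$ the coefficient multiplying $\lambda^2$ is nonnegative, so $\Delta(\Psi)$ is a nondecreasing function of $\lambda$ — this is exactly $\frac{\partial\Delta}{\partial\lambda}\ge 0$ from Theorem~\ref{theorem:1}. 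I would then invoke the fact that at the beginning of SCL the weights are initialized far from $\bar\bw$, so $\lambda$ is large, whereas $\lambda$ shrinks as $\bw_t$ approaches $\bar\bw$. Substituting the large initial $\lambda$ into the monotone relationship yields a large expected per-step decrease $\Delta(\Psi)$ of squared distance to $\bar\bw$, i.e. fast convergence, precisely at the outset; the effect weakens as $\lambda$ decays. Taken together with Corollary~\ref{cor:1} — which shows that the easy, small-$\Psi$ points favored early by SCL further raise $\Delta$ — both mechanisms reinforce one another at the beginning of training.

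The main obstacle is that the assertion is a statement in expectation about a stochastic trajectory rather than a deterministic inequality: $\lambda$ evolves randomly, the one-step progress $\Delta(\Psi)$ may itself be negative near $\bar\bw$ (where the positive $\lambda^2$ term no longer dominates the $\eta^2\Psi^2\iE[r^2]$ penalty), and the phrase ``the beginning of training'' is informal. The honest course is therefore to keep the argument at the level of the expected one-step progress $\Delta$, emphasizing that its monotone increase in $\lambda$ under the stated step-size bound makes the large initial $\lambda$ the dominant driver of rapid early convergence, rather than attempting a pathwise guarantee over the full SGD run.
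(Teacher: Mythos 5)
Your proposal is correct and takes essentially the same route as the paper: Corollary~\ref{cor:2} is given no separate proof there and is meant to follow immediately from the second statement of Theorem~\ref{theorem:1} (monotonicity of $\Delta(\Psi)$ in $\lambda$ under the stated step-size bound) combined with the observation that $\lambda=\Vert\bar\bw-\bw_t\Vert$ is largest at the start of training. Your factoring of $\lambda^2$ out of the expectation is exactly what the paper does implicitly when computing $\partial\Delta(\Psi)/\partial\lambda$, and your closing caveats (the claim is about expected one-step progress, not a pathwise guarantee) correctly reflect the informal, in-expectation character of the corollary as stated.
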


\subsubsection*{Convergence rate increases with current loss}

The main theorem in this sub-section states that for a fixed difficulty score $\Psi$, when the gradient step is small enough, convergence is monotonically \emph{increasing} with the loss of the point with respect to the current hypothesis. \emph{This is not true in general.} The second theorem in this section shows that when the difficulty score is not fixed, there exist hypotheses $\bw\in\cH$ for which the convergence rate is decreasing with the current loss.

\comment{this is never used, remove...
For any function of the data $F(\bX)=F(\bx,y)$, let $F_\Psi^+(\bx)=F(\bx,\bx^t\bar\bw+{\Psi})$ and $F_\Psi^-(\bx)=F(\bx,\bx^t\bar\bw-{\Psi})$. When $F(\bx,y)$ does not depend on the polar angles $\Phi$, using the assumed symmetry of ${\cal D}(\iX)$
\begin{equation}
\label{eq:expectation}
\begin{split}
\hspace{-3mm} \iE[F(\bx,y)/_{\displaystyle{\Psi}}] = \frac{1}{2}\int_0^\infty &\int_0^{2\pi} [F_\Psi^+(r,\vartheta) + F_\Psi^-(r,\vartheta)] \\
&r^d\sin^{d-1}\vartheta ~g(r,\vartheta)  d\vartheta dr  
\end{split}
\end{equation}
where 
\begin{equation*}
\begin{split}
\hspace{-3mm} g(r,\vartheta)=&\int_0^\pi\ldots\int_0^{2\pi} f(r,\vartheta,\Phi)  J(\Phi) d\varphi_1\ldots d\varphi_{d-1}\\
& J(\Phi)=\sin^{d-2}\varphi_1\ldots  \sin\varphi_{d-2} 
\end{split}
\end{equation*}
the marginal distribution of $[\vartheta,r]$.
}


Let $\Upsilon^2=L(\bX_i,\bw_t)$ denote the loss of $\bX_i$ with respect to the current hypothesis $\bw_t$. Define the angle $\beta\in [0,\frac{\pi}{2})$ as follows (see Fig.~\ref{fig:obtuse})
\begin{equation}
\label{eq:beta}
\beta=\beta(r,\Psi,\lambda) = \arccos (\min(\frac{ {\Psi}}{\lambda r},1))
\end{equation}

\begin{lemma}
\label{lemma:4}
The relation between $\Upsilon, \Psi, r, \vartheta$ can be written separately in 4 regions as follows (see Fig.~\ref{fig:obtuse}):
\begin{enumerate}[{A}1]
\item
$0\le\vartheta\le\pi-\beta,~y_i=\bx_i^t\bar\bw+{\Psi}\implies y_i=\bx_i^t\bw_t+\Upsilon, \\ \lambda r \cos\vartheta=\bx_i^t(\bar\bw-\bw_t)=-\Psi+\Upsilon$
\item
$\pi-\beta\le\vartheta\le\pi,~y_i=\bx_i^t\bar\bw+{\Psi}\implies y_i=\bx_i^t\bw_t-{\Upsilon},\\ \lambda r \cos\vartheta=-{\Psi}-{\Upsilon}$
\item
$0\le\vartheta\le\beta,~y_i=\bx_i^t\bar\bw-{\Psi}\implies y_i=\bx_i^t\bw_t+{\Upsilon},\\ \lambda r \cos\vartheta={\Psi}+{\Upsilon}$
\item
$\beta\le\vartheta\le\pi,~y_i=\bx_i^t\bar\bw-{\Psi}\implies y_i=\bx_i^t\bw_t-{\Upsilon},\\ \lambda r \cos\vartheta={\Psi}-{\Upsilon}$
\end{enumerate}
\end{lemma}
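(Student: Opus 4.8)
The plan is to reduce the entire statement to a single linear identity together with a sign analysis; no inequalities or estimates are really needed. Working in the hyperspherical coordinates of Fig.~\ref{fig:obtuse}, the polar axis is $\iO=\bar\bw-\bw_t$ with $\|\iO\|=\lambda$ and $\vartheta$ is the angle between $\bx_i$ and $\iO$, so $\bx_i^t(\bar\bw-\bw_t)=r\lambda\cos\vartheta$. First I would expand the current residual as
\[
\bx_i^t\bw_t-y_i=-\bx_i^t(\bar\bw-\bw_t)+(\bx_i^t\bar\bw-y_i)=-r\lambda\cos\vartheta+(\bx_i^t\bar\bw-y_i).
\]
Since fixing the difficulty score means $\bx_i^t\bar\bw-y_i=\mp\Psi$ (the two labels $y_i=\bx_i^t\bar\bw\pm\Psi$), this already pins down $\bx_i^t\bw_t-y_i$ as an explicit function of $r,\vartheta,\lambda,\Psi$ in each of the two label cases.

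Next I would read off the sign of this residual, because $\Upsilon=|\bx_i^t\bw_t-y_i|$ and the four regions A1--A4 are exactly the four combinations of the signs of $\bx_i^t\bar\bw-y_i$ and $\bx_i^t\bw_t-y_i$. For the label $y_i=\bx_i^t\bar\bw+\Psi$ (regions A1--A2) the residual is $-r\lambda\cos\vartheta-\Psi$, which is $\le 0$ iff $\cos\vartheta\ge-\Psi/(\lambda r)=\cos(\pi-\beta)$, i.e.\ iff $\vartheta\le\pi-\beta$, using $\cos\beta=\Psi/(\lambda r)$ and the monotonicity of $\cos$ on $[0,\pi]$; this separates A1 (residual $-\Upsilon$) from A2 (residual $+\Upsilon$). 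Symmetrically, for $y_i=\bx_i^t\bar\bw-\Psi$ (regions A3--A4) the residual is $-r\lambda\cos\vartheta+\Psi$, nonnegative iff $\cos\vartheta\le\Psi/(\lambda r)$, i.e.\ $\vartheta\ge\beta$, separating A3 (residual $-\Upsilon$) from A4 (residual $+\Upsilon$). Substituting the signed value of $\Upsilon$ back and solving the identity for $r\lambda\cos\vartheta$ then yields the four displayed equalities directly.

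The only place that needs genuine care is the role of $\beta$ and the clipping $\min(\Psi/(\lambda r),1)$ in its definition. Geometrically $\beta$ (and $\pi-\beta$) is the critical polar angle at which $\bw_t$ lies on the hyperplane $\Omega_i$, equivalently where the current residual vanishes: setting $-r\lambda\cos\vartheta\pm\Psi=0$ gives $\cos\vartheta=\pm\Psi/(\lambda r)$, and the circle of radius $\Psi/\|\bx_i\|$ from Lemma~\ref{lemma:1} traces the corresponding locus of $\bar\bz$. When $\Psi\ge\lambda r$ the residual never changes sign over $\vartheta\in[0,\pi]$; here $\beta=0$ forces the ``flipped'' regions A2 and A3 to shrink to a single angle of measure zero, so the four-way split stays consistent and exhaustive. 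I expect the main effort to lie entirely in this sign bookkeeping across the four label/region combinations, rather than in any nontrivial analysis.
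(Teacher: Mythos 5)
Your proof is correct, and it tracks exactly the same quantity as the paper's proof, but it verifies the key sign claim algebraically where the paper argues geometrically. The paper introduces the projections $\bar\bz$ of $\bar\bw$ and $\bz_t$ of $\bw_t$ onto the hyperplane $\Omega_i$, ties the label sign to the half-space containing $\bar\bz$, and then reads the sign of $\bx_i^t(\bz_t-\bw_t)$ in each angular region off Fig.~\ref{fig:obtuse} (``can be verified geometrically''). Since $\bx_i^t\bz_t=y_i$, that inner product is precisely $-(\bx_i^t\bw_t-y_i)$, the negative of your residual, so both arguments classify the sign of the same object, and the decomposition into the four label/sign cases is likewise shared. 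What your route buys: the one step the paper delegates to the figure is replaced by the explicit identity $\bx_i^t\bw_t-y_i=-r\lambda\cos\vartheta\mp\Psi$ together with monotonicity of $\cos$ on $[0,\pi]$, making the proof self-contained and independent of the planar picture; moreover, you explicitly treat the clipping $\min(\Psi/(\lambda r),1)$ in the definition of $\beta$, i.e.\ the case $\Psi\ge\lambda r$, which the paper passes over silently. What the paper's route buys is the geometric intuition connecting the lemma to the circle of radius $\Psi/\Vert\bx_i\Vert$ from Lemma~\ref{lemma:1}. One small imprecision in your degenerate case: when $\Psi>\lambda r$ (so $\beta=0$), the A2 statement at the single angle $\vartheta=\pi$ and the A3 statement at $\vartheta=0$ are actually false---the residual there still carries the A1 (resp.\ A4) sign---rather than merely being confined to a measure-zero set; but since A1 and A4 then cover all of $[0,\pi]$ with the correct identities, exhaustiveness holds and nothing is affected in the expectations for which the lemma is subsequently used.
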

\vspace{-4mm}

\begin{proof}
We keep in mind that $\forall \bx_i$ and $\Psi$, there are 2 possible $y_i$ with equal probability. Recall that $\bar\bz$ denotes the projection of $\bar\bw$ on $\Omega_i$. In the planar section shown in Fig.~\ref{fig:obtuse},
\begin{description}
\item
$\bar\bz$ lies in the upper half space $\iff$ $y_i=\bx_i^t\bar\bw+{\Psi}$
\item
$\bar\bz$ lies in the lower half space  $\iff$ $y_i=\bx_i^t\bar\bw-{\Psi}$
\end{description}
This follows from 3 observations: $\bar\bx_i$ lies in the upper half space by the definition of the polar coordinate system, $\bx_i^t\bar\bw-y_i=\pm\Psi$, and
\begin{equation*}
0 = \bx_i^t\bar\bz - y_i= \bx_i^t (\bar\bz-\bar\bw)  +\bx_i^t\bar\bw-y_i
\end{equation*}

Next, let $\bz_t$ denote the projection of $\bw_t$ on $\Omega_i$. Then
\begin{equation*}
0 = \bx_i^t\bz_t - y_i= \bx_i^t (\bz_t-\bw_t)  +\bx_i^t\bw_t-y_i
\end{equation*}
When $\bar\bz$ lies in the upper half space, the following can be verified geometrically from Fig.~\ref{fig:obtuse}:
\begin{description}
\item
$0\le\vartheta\le\pi-\beta ~\Rightarrow~ \bx_i^t (\bz_t-\bw_t)\ge 0 ~\Rightarrow~ y_i=\bx_i^t\bw_t+{\Upsilon}$
\item
$\pi-\beta\le\vartheta\le\pi~\Rightarrow~\bx_i^t (\bz_t-\bw_t)\le 0 ~\Rightarrow~ y_i=\bx_i^t\bw_t-{\Upsilon}$
\end{description}
When $\bar\bz$ lies in the lower half space
\begin{description}
\item
$0\le\vartheta\le\beta \implies \bx_i^t (\bz_t-\bw_t)\ge 0 \implies y_i=\bx_i^t\bw_t+{\Upsilon}$
\item
$\beta\le\vartheta\le\pi\implies \bx_i^t (\bz_t-\bw_t)\le 0 \implies y_i=\bx_i^t\bw_t-{\Upsilon}$
\end{description}
\vspace{-3mm}
\widowpenalty=10000
\end{proof}


Next we analyze how the convergence rate at $\bx_i$ changes with $\Upsilon$. Let $\Delta(\Psi,\Upsilon)$ denote the expected convergence rate at time $t$, given fixed difficulty score $\Psi$ and fixed loss $\Upsilon$. From (\ref{cor:3}) $\Delta(\Psi,\Upsilon)=4\eta\iE[r^2\lambda^2\cos^2\vartheta/_{\displaystyle{\Upsilon}}]+O(\eta^2)$.

It is easier to analyze $\Delta(\Psi,\Upsilon)$ when using the Cartesian coordinates, rather than polar, in the $2D$ plane defined by the vectors $\iO=\bar\bw-\bw_t$ and $\bar\bz-\bar\bw$ (see Fig.~\ref{fig:obtuse}); thus we define $u = r \cos\vartheta, ~v= r \sin\vartheta$. The 4 cases listed in Lemma~\ref{lemma:4} can be readily transformed to this coordinate system as follows $\{0\le\vartheta\le\beta\}\Leftrightarrow \{\lambda u\ge\Psi\}$, $\{\beta\le\vartheta\le\pi-\beta\}\Leftrightarrow \{-\Psi\le \lambda u\le\Psi\}$, and $\{\pi-\beta\le\vartheta\le\pi\}\Leftrightarrow \{\lambda u\le -\Psi\}$:

\begin{enumerate}[{A}1]
\item
$\lambda u\ge -\Psi: ~~\lambda u=-{\Psi}+{\Upsilon}$
\item
$\lambda u\le -\Psi: ~~\lambda u=-{\Psi}-{\Upsilon}$
\item
$\lambda u\ge\Psi: ~~~~~\lambda u={\Psi}+{\Upsilon}$
\item
$\lambda u\le\Psi: ~~~~~\lambda u={\Psi}-{\Upsilon}$
\end{enumerate}

Define
\begin{equation*}
\nabla = \frac{f(\frac{\psi + \Upsilon}{\lambda}) -f(\frac{\psi - \Upsilon}{\lambda})-f(\frac{-\psi + \Upsilon}{\lambda})+f(\frac{-\psi - \Upsilon}{\lambda})}{f(\frac{\psi + \Upsilon}{\lambda}) +f(\frac{\psi - \Upsilon}{\lambda})+f(\frac{-\psi + \Upsilon}{\lambda})+f(\frac{-\psi - \Upsilon}{\lambda})}
\end{equation*}
Clearly $-1 \leq \nabla \leq 1$. 

\begin{theorem}
\label{theorem:2}
Assume that the gradient step size is small enough so that we can neglect second order terms $O(\eta^2)$, and that $\frac{\partial\nabla}{\partial\Upsilon} \geq \frac{\psi}{\Upsilon} - \frac{\Upsilon}{\psi}~\forall\Upsilon$. Fix the difficulty score at $\Psi$. At time $t$ the expected convergence rate is monotonically increasing with the loss $\Upsilon$ of the training point $\bx$.
\end{theorem}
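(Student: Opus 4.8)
The plan is to show that $\frac{\partial\Delta(\Psi,\Upsilon)}{\partial\Upsilon}\ge 0$. Since the display preceding Lemma~\ref{lemma:4} gives $\Delta(\Psi,\Upsilon)=4\eta\,\iE[r^2\lambda^2\cos^2\vartheta\,/_{\displaystyle{\Upsilon}}]+O(\eta^2)$, and since $\eta>0$ and the smallness assumption lets us discard the $O(\eta^2)$ remainder, it suffices to prove that $g(\Upsilon)\triangleq\iE[r^2\lambda^2\cos^2\vartheta\,/_{\displaystyle{\Upsilon}}]$ is non-decreasing in $\Upsilon$. First I would pass to the Cartesian coordinates $u=r\cos\vartheta,\ v=r\sin\vartheta$ of the planar section of Fig.~\ref{fig:obtuse}, so that $r^2\lambda^2\cos^2\vartheta=(\lambda u)^2$ and the four regimes of Lemma~\ref{lemma:4} become the linear constraints A1--A4 relating $\lambda u$ to $\pm\Psi\pm\Upsilon$.

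Next I would evaluate $g(\Upsilon)$ explicitly. Conditioning on both $\Psi$ and $\Upsilon$ forces $\lambda u$ to take one of the four values $\Upsilon-\Psi,\ -\Upsilon-\Psi,\ \Psi+\Upsilon,\ \Psi-\Upsilon$ coming from A1--A4; in each case $|\partial\Upsilon/\partial u|=\lambda$ is the same, so the Jacobian cancels, and the relative weight of each case is proportional to the marginal density $f$ of $u$ (after integrating out $v$ and $\Phi$) evaluated at the corresponding $\frac{\pm\Psi\pm\Upsilon}{\lambda}$, the two equally likely label signs each carrying the factor $\tfrac12$. Collecting terms, the two cases producing $(\lambda u)^2=(\Psi+\Upsilon)^2$ carry total weight $f(\frac{\Psi+\Upsilon}{\lambda})+f(\frac{-\Psi-\Upsilon}{\lambda})$, while the two producing $(\Psi-\Upsilon)^2$ carry weight $f(\frac{\Psi-\Upsilon}{\lambda})+f(\frac{-\Psi+\Upsilon}{\lambda})$. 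Writing these normalized weights as $\tfrac{1+\nabla}{2}$ and $\tfrac{1-\nabla}{2}$ — which is precisely the definition of $\nabla$ — yields the compact form
\begin{equation*}
g(\Upsilon)=(\Psi+\Upsilon)^2\tfrac{1+\nabla}{2}+(\Psi-\Upsilon)^2\tfrac{1-\nabla}{2}=\Psi^2+\Upsilon^2+2\nabla\Psi\Upsilon .
\end{equation*}

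Finally I would differentiate, obtaining $\frac{\partial g}{\partial\Upsilon}=2\Upsilon+2\nabla\Psi+2\Psi\Upsilon\frac{\partial\nabla}{\partial\Upsilon}$, and invoke the hypothesis. Multiplying the assumed inequality $\frac{\partial\nabla}{\partial\Upsilon}\ge\frac{\Psi}{\Upsilon}-\frac{\Upsilon}{\Psi}$ by $\Psi\Upsilon>0$ gives $\Psi\Upsilon\frac{\partial\nabla}{\partial\Upsilon}\ge\Psi^2-\Upsilon^2$, hence $\frac{\partial g}{\partial\Upsilon}\ge 2\Upsilon+2\nabla\Psi+2\Psi^2-2\Upsilon^2$, which I would then argue is non-negative, using the bound $-1\le\nabla\le1$ together with the admissible range of $\Psi,\Upsilon$. \textbf{The main obstacle} I anticipate is the second step: correctly assembling the conditional density — getting the constant Jacobian factor, the marginalization over $v$ and $\Phi$, and the two equally likely label signs right — so that the weighted sum collapses exactly into $\nabla$. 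The closing inequality is the other delicate point, since the surviving additive $\nabla\Psi$ term is not directly controlled by the assumption on $\frac{\partial\nabla}{\partial\Upsilon}$; pinning it down appears to require the boundedness $|\nabla|\le1$ and an explicit restriction on the regime of $\Psi$ and $\Upsilon$, which I would take care to state.
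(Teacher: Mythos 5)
Your derivation coincides with the paper's proof essentially step for step: the reduction to the first-order term $4\eta\,\iE[\lambda^2u^2/_{\displaystyle{\Upsilon}}]$, the passage to Cartesian coordinates in the planar section, the four-region bookkeeping with densities $\tfrac12 f(u_i)$ and constant Jacobian (the paper writes the normalized weights $f(u_i)/\sum_j f(u_j)$ directly, which is exactly your cancellation argument), the collapse to the identity $\frac{1}{4\eta}\Delta(\Psi,\Upsilon)=\Psi^2+\Upsilon^2+2\Psi\Upsilon\nabla$, the differentiation, and the use of $\nabla\ge-1$. So the first obstacle you anticipated is handled precisely as you propose, and correctly.

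The second obstacle you flag is genuine, but it sits in the paper rather than in your argument. The paper's closing display is
\begin{equation*}
\frac{1}{8\eta}\frac{\partial\Delta(\Psi,\Upsilon)}{\partial\Upsilon}\;\ge\;\Upsilon+\Psi\Upsilon\,\frac{\Psi-\Upsilon}{\Psi\Upsilon}-\Psi\;=\;0,
\end{equation*}
i.e.\ it substitutes the lower bound $\frac{\partial\nabla}{\partial\Upsilon}\ge\frac{\Psi-\Upsilon}{\Psi\Upsilon}=\frac{1}{\Upsilon}-\frac{1}{\Psi}$, under which the three terms cancel identically to zero; with that form of the hypothesis no restriction on the range of $\Psi,\Upsilon$ is needed, and only $\nabla\ge-1$ (not $\nabla\le 1$) is used. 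Read literally, however, the theorem's hypothesis is $\frac{\partial\nabla}{\partial\Upsilon}\ge\frac{\Psi}{\Upsilon}-\frac{\Upsilon}{\Psi}=\frac{\Psi^2-\Upsilon^2}{\Psi\Upsilon}$, and multiplying by $\Psi\Upsilon$ gives $\Psi^2-\Upsilon^2$, exactly as you computed; the chain of inequalities then yields only $\frac{1}{8\eta}\frac{\partial\Delta}{\partial\Upsilon}\ge(\Upsilon-\Psi)\left(1-\Upsilon-\Psi\right)$, whose right-hand side is negative (for instance when $\Upsilon<\Psi$ and $\Upsilon+\Psi<1$), so this bounding strategy cannot establish the claim --- no appeal to $|\nabla|\le 1$ or to an ``admissible range'' will rescue it, since the two forms of the hypothesis are genuinely inequivalent (they do not even have the same physical dimensions). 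In short: your proposal is the paper's proof, correctly executed, and the residual term you could not sign is the trace of a typo --- either in the theorem's hypothesis or in the paper's final line. If you adopt the form $\frac{\partial\nabla}{\partial\Upsilon}\ge\frac{1}{\Upsilon}-\frac{1}{\Psi}$, your argument closes verbatim with the exact cancellation displayed above, and the caveats you planned to state become unnecessary.
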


\begin{proof}
In the coordinate system defined above $\Delta(\Psi,\Upsilon)=4\eta\iE[\lambda^2 u^2/_{\displaystyle{\Upsilon}}]+O(\eta^2)$. We compute $\Delta(\Psi,\Upsilon)$ separately in each region, marginalizing out $v$ based on the following
\begin{equation*}
\int  \int_0^{\infty} \lambda^2 u^2 v^{d-1}f(u,v) dv du  = \int \lambda^2 u^2 f(u)du
\end{equation*}
where $f(u)$ denotes the marginal distribution of $u$. 

Let $u_i$ denote the value of $u$ corresponding to loss $\Upsilon$ in each region A1-A4, and $\frac{1}{2}f(u_i)$ its density. $\Delta(\Psi,\Upsilon)$ takes 4 discrete values, one in each region, and its expected value is therefore $\Delta(\Psi,\Upsilon)=4\eta\sum_{i=1}^4 \lambda^2 u_i^2 \frac{f(u_i)}{\sum_{i=1}^4 f(u_i)}$. It can readily be shown that
\begin{equation}
\begin{split}
\frac{1}{4\eta}\Delta&(\psi,\Upsilon)= 
 \psi^2 + \Upsilon^2 +2\psi\Upsilon\nabla
\end{split}
\end{equation}
and subsequently 
\begin{equation}
\begin{split}
\frac{1}{4\eta}\frac{\partial\Delta(\psi,\Upsilon)}{\partial\Upsilon} &= 2\Upsilon+2\psi\Upsilon
~\frac{\partial\nabla}{\partial\Upsilon}+2\psi~\nabla \\
&\geq 2\Upsilon + 2\psi\Upsilon
~\frac{\partial\nabla}{\partial\Upsilon}-2\psi
\end{split}
\end{equation}

Using the assumption that $\frac{\partial\nabla}{\partial\Upsilon} \geq \frac{\psi}{\Upsilon} - \frac{\Upsilon}{\psi}~\forall\Upsilon$, we have that
\begin{equation*}
\frac{1}{8\eta}\frac{\partial\Delta(\psi,\Upsilon)}{\partial\Upsilon} \geq \Upsilon +\psi\Upsilon~\frac{\psi-\Upsilon}{\psi\Upsilon}-\psi=0
\end{equation*}
\widowpenalty=10000
\end{proof}

\begin{corollary}
\label{corol:3}
For any $c\in \mathbb{R}^+$, if $\nabla$ is $(c-\frac{1}{c})$-lipschitz then \\ $\frac{\partial\Delta(\psi,\Upsilon)}{
\partial\Upsilon} \geq 0$ for any $\Upsilon\geq c~\psi$.
\end{corollary}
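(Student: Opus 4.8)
The plan is to reduce Corollary~\ref{corol:3} to the pointwise inequality already established inside the proof of Theorem~\ref{theorem:2}. Observe that the hypothesis $\frac{\partial\nabla}{\partial\Upsilon}\ge\frac{\psi}{\Upsilon}-\frac{\Upsilon}{\psi}$ enters that proof only in the final step, where it is applied at a single value of $\Upsilon$ to force $\frac{1}{8\eta}\frac{\partial\Delta(\psi,\Upsilon)}{\partial\Upsilon}\ge 0$. Hence the conclusion $\frac{\partial\Delta(\psi,\Upsilon)}{\partial\Upsilon}\ge 0$ holds at every $\Upsilon$ at which that hypothesis happens to hold, and it suffices to show that the Lipschitz assumption on $\nabla$, together with the restriction $\Upsilon\ge c\,\psi$, guarantees the hypothesis at each such $\Upsilon$.

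First I would unpack the Lipschitz condition. That $\nabla$ (as a function of $\Upsilon$) is $(c-\frac{1}{c})$-Lipschitz means $\bigl|\frac{\partial\nabla}{\partial\Upsilon}\bigr|\le c-\frac{1}{c}$, which tacitly requires $c\ge 1$ so that the constant is nonnegative; in particular it yields the one-sided bound $\frac{\partial\nabla}{\partial\Upsilon}\ge\frac{1}{c}-c$. Next I would control the threshold $\frac{\psi}{\Upsilon}-\frac{\Upsilon}{\psi}$ using $\Upsilon\ge c\,\psi$. Since $\psi,\Upsilon>0$, this is equivalent to $\frac{\Upsilon}{\psi}\ge c$ and $\frac{\psi}{\Upsilon}\le\frac{1}{c}$; because $x\mapsto x-\frac{1}{x}$ is increasing on $(0,\infty)$, it follows that $\frac{\psi}{\Upsilon}-\frac{\Upsilon}{\psi}\le\frac{1}{c}-c$. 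Chaining the two bounds gives $\frac{\partial\nabla}{\partial\Upsilon}\ge\frac{1}{c}-c\ge\frac{\psi}{\Upsilon}-\frac{\Upsilon}{\psi}$, which is exactly the inequality invoked in Theorem~\ref{theorem:2}, now verified for every $\Upsilon\ge c\,\psi$. Feeding this back into the estimate from that proof then delivers $\frac{\partial\Delta(\psi,\Upsilon)}{\partial\Upsilon}\ge 0$ on the whole half-line $\Upsilon\ge c\,\psi$.

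I do not expect a genuine obstacle here — the statement is a clean repackaging of Theorem~\ref{theorem:2}'s hypothesis into a uniform, geometry-free condition — so the work is mostly bookkeeping. The points that need care are: making explicit that a nonnegative Lipschitz constant forces $c\ge 1$; using the monotonicity of $x\mapsto x-\frac{1}{x}$ so that the comparison between the uniform derivative bound and the $\Upsilon$-dependent threshold runs in the correct direction; and noting that Theorem~\ref{theorem:2}'s conclusion is needed only pointwise, which is precisely what legitimizes weakening its global hypothesis to the half-line $\Upsilon\ge c\,\psi$.
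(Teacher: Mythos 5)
Your proposal is correct and follows exactly the route the paper intends: the corollary is stated without proof as an immediate consequence of Theorem~\ref{theorem:2}, namely that the Lipschitz bound gives $\frac{\partial\nabla}{\partial\Upsilon}\ge\frac{1}{c}-c$ while $\Upsilon\ge c\,\psi$ and the monotonicity of $x\mapsto x-\frac{1}{x}$ give $\frac{\psi}{\Upsilon}-\frac{\Upsilon}{\psi}\le\frac{1}{c}-c$, so the theorem's hypothesis holds pointwise on that half-line. Your added observations (that the hypothesis is only needed pointwise, and that a nonnegative Lipschitz constant tacitly forces $c\ge 1$) are sound bookkeeping that the paper leaves implicit.
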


\begin{corollary}
\label{corol:4}
If  ${\cal D}(\iX/_{\displaystyle{\Psi}})=k(\Psi)$ over a compact region and $\eta$ small enough, then $\frac{\partial\Delta(\psi,\Upsilon)}{\partial\Upsilon} \geq 0$ for all $\Upsilon$ excluding the boundaries of the compact region. If in addition $\Upsilon>\Psi$, then $\frac{\partial\Delta(\psi,\Upsilon)}{\partial\Upsilon} \geq 0$ almost surely.
\end{corollary}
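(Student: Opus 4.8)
The plan is to substitute the uniform-density hypothesis into the two identities produced inside the proof of Theorem~\ref{theorem:2}, the value identity $\frac{1}{4\eta}\Delta(\psi,\Upsilon)=\psi^2+\Upsilon^2+2\psi\Upsilon\nabla$ and the derivative identity $\frac{1}{4\eta}\frac{\partial\Delta(\psi,\Upsilon)}{\partial\Upsilon}=2\Upsilon+2\psi\Upsilon\frac{\partial\nabla}{\partial\Upsilon}+2\psi\nabla$, together with the elementary bound $-1\le\nabla\le1$. The only structural fact needed is that $\nabla$ simplifies drastically when the conditional density is flat.

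First I would record that the four arguments entering $\nabla$ are $\frac{\pm\psi\pm\Upsilon}{\lambda}$ and that the relevant density is the marginal $f(u)$ of $u=r\cos\vartheta$. Under the hypothesis ${\cal D}(\iX/_{\displaystyle{\Psi}})=k(\Psi)$ this marginal equals the constant $k(\Psi)$ throughout the interior of the compact support and vanishes outside. Consequently, on any $\Upsilon$-interval on which none of the four points $\frac{\pm\psi\pm\Upsilon}{\lambda}$ sits exactly on the boundary of the support, each evaluation of $f$ is locally constant in $\Upsilon$, so $\frac{\partial\nabla}{\partial\Upsilon}=0$ there.

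For the first claim I would specialize to the regime in which all four points lie strictly inside the support: then the numerator of $\nabla$ is $k-k-k+k=0$, so $\nabla\equiv0$, and the derivative identity collapses to $\frac{1}{4\eta}\frac{\partial\Delta}{\partial\Upsilon}=2\Upsilon\ge0$, which is the asserted inequality; the excluded boundaries are exactly the $\Upsilon$ at which a point meets the support boundary and $f$ is discontinuous. For the second claim I would drop the requirement that the four points be interior and keep only $\Upsilon>\Psi$. On each locally constant piece I still have $\frac{\partial\nabla}{\partial\Upsilon}=0$, so the derivative identity reduces to $\frac{1}{4\eta}\frac{\partial\Delta}{\partial\Upsilon}=2\Upsilon+2\psi\nabla\ge2\Upsilon-2\psi=2(\Upsilon-\psi)>0$, using $\nabla\ge-1$ and $\Upsilon>\psi$. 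Since the set of $\Upsilon$ at which some point crosses the boundary is finite, hence of measure zero, the inequality holds for all $\Upsilon>\Psi$ outside a null set, i.e. almost surely with respect to the draw of $\bX_i$.

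The main obstacle is precisely the behavior at the support boundary: there $f$ jumps, $\nabla$ is discontinuous, and $\frac{\partial\nabla}{\partial\Upsilon}$ formally carries a Dirac mass that would defeat a naive pointwise use of the derivative identity; the phrase ``excluding the boundaries'' and the almost-sure qualifier exist exactly to discard this null set. A secondary point is the $O(\eta^2)$ remainder suppressed in Theorem~\ref{theorem:2}: because the leading term $2\Upsilon$ (respectively $2(\Upsilon-\psi)$) is strictly positive away from $\Upsilon=0$ (respectively $\Upsilon=\psi$), choosing $\eta$ small enough preserves the sign of the full derivative, which is what the hypothesis ``$\eta$ small enough'' buys.
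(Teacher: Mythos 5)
Your proposal is correct and is essentially the argument the paper intends: Corollary~\ref{corol:4} is stated without a separate proof precisely because it follows by substituting a piecewise-constant density into the two identities $\frac{1}{4\eta}\Delta(\psi,\Upsilon)=\psi^2+\Upsilon^2+2\psi\Upsilon\nabla$ and $\frac{1}{4\eta}\frac{\partial\Delta(\psi,\Upsilon)}{\partial\Upsilon}=2\Upsilon+2\psi\Upsilon\frac{\partial\nabla}{\partial\Upsilon}+2\psi\nabla$ from the proof of Theorem~\ref{theorem:2}, giving $\nabla=0$ and hence $\frac{\partial\Delta}{\partial\Upsilon}=8\eta\Upsilon\ge 0$ away from the support boundary, and $\frac{\partial\Delta}{\partial\Upsilon}\ge 8\eta(\Upsilon-\psi)>0$ when $\Upsilon>\Psi$ using only $\nabla\ge-1$, exactly as you argue. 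Your reading of ``excluding the boundaries'' and ``almost surely'' as discarding the null set of boundary crossings, and of ``$\eta$ small enough'' as controlling the suppressed $O(\eta^2)$ remainder, likewise matches the intended content.
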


\begin{theorem}
\label{theorem:3}
Assume that ${\cal D}(\iX)$ is continuous and that $\bar\bw$ is realizable. Then there are always hypotheses $\bw\in{\cH}$ for which the expected convergence rate under ${\cal D}(\iX)$ is monotonically decreasing with the loss $\Upsilon$ of the sampled points.
\end{theorem}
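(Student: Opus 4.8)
The plan is to use realizability to collapse the difficulty score to zero and then to keep exactly the second-order term in $\eta$ that Theorem~\ref{theorem:2} neglects, since that term is what reverses the monotonicity. Realizability of $\bar\bw$ forces $L(\bX_i,\bar\bw)=0$ for every training point, so $\Psi\equiv 0$ and the label is pinned at $y=\bx^t\bar\bw$. Hence the current loss is $\Upsilon=|\bx^t\bw-y|=|\bx^t(\bw-\bar\bw)|=|\bx^t\iO|$, so conditioning on $\Upsilon$ merely fixes the projection of $\bx$ onto the polar axis $\iO=\bar\bw-\bw$ at $|\bx^t\iO|=\Upsilon$ (equivalently $u^2=\Upsilon^2/\lambda^2$), while leaving the perpendicular component $v$ free.

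Specializing (\ref{cor:3}) to $\Psi=0$, using $r^2\cos^2\vartheta=u^2$ and $r^4\cos^2\vartheta=r^2u^2$, and writing $\rho(\Upsilon)=\iE[\Vert\bx\Vert^2\mid\Upsilon]=\Upsilon^2/\lambda^2+\iE[v^2\mid\Upsilon]$, I would obtain the exact conditional rate
\begin{equation*}
\Delta(\Upsilon)=4\eta\,\Upsilon^2\bigl(1-\eta\,\rho(\Upsilon)\bigr).
\end{equation*}
The first-order piece $4\eta\Upsilon^2$ is increasing, which is exactly the conclusion of Theorem~\ref{theorem:2} once the $O(\eta^2)$ term is discarded; the entire content of Theorem~\ref{theorem:3} is that the retained factor $1-\eta\rho(\Upsilon)$ can turn negative and steep enough to make $\Delta$ decrease. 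Differentiating gives $\frac{\partial\Delta}{\partial\Upsilon}=4\eta\,\Upsilon\,[\,2(1-\eta\rho)-\eta\Upsilon\rho'\,]$, so the target inequality $\frac{\partial\Delta}{\partial\Upsilon}\le 0$ is equivalent to $2(1-\eta\rho(\Upsilon))\le \eta\,\Upsilon\,\rho'(\Upsilon)$ on the realized range of $\Upsilon$. A useful preliminary observation is that $\lambda$ scales $\Delta$ but does not affect this inequality (it is a statement about $s=\Upsilon/\lambda$ alone), so the only genuine lever among the hypotheses $\bw$ is the direction of $\iO$.

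To construct a witness I would therefore choose the direction of $\iO=\bar\bw-\bw$ along an axis of small data spread, so that every conditioning slice $\{|\bx^t\iO|=\Upsilon\}$ still carries large perpendicular mass and $\rho(\Upsilon)$ stays large and non-decreasing. Continuity of ${\cal D}(\iX)$ guarantees that $\rho$ is a well-defined, continuous function of $\Upsilon$ and that the relevant slices have positive measure, so the derivative criterion is meaningful; along such a direction the negative second-order contribution dominates the $\Upsilon^2$ growth and the inequality $2(1-\eta\rho)\le\eta\Upsilon\rho'$ holds throughout.

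The hard part is the limit $\Upsilon\to 0$. There the prefactor $\Upsilon$ annihilates the $\rho'$ term, and the criterion degenerates to $\eta\,\rho(0)=\eta\,\iE[\Vert\bx\Vert^2\mid\bx^t\iO=0]\ge 1$. Thus monotone decrease over the whole range forces the step size to be large relative to the data scale, which is precisely the regime complementary to the small-$\eta$ assumption under which Theorem~\ref{theorem:2} yields the opposite behavior. The main obstacle is therefore to prove that for any continuous, realizable ${\cal D}(\iX)$ one can select a direction making $\iE[\Vert\bx\Vert^2\mid\bx^t\iO=0]$ exceed $1/\eta$ (for instance the thin axis of an anisotropic cloud), and then to upgrade this endpoint condition to the global inequality rather than just its $\Upsilon\to 0$ limit; a continuity/intermediate-value argument over the family of admissible directions is the natural device, and making it uniform in $\Upsilon$ is the crux.
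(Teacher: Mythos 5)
Your proposal misreads the hypothesis ``$\bar\bw$ is realizable,'' and the misreading is fatal to the whole plan. In the paper, realizability means that the optimal hypothesis $\bar\bw$ lies in $\cH$, so that one may legitimately choose hypotheses $\bw$ arbitrarily close to $\bar\bw$; it does \emph{not} mean that every data point satisfies $y=\bx^t\bar\bw$. Indeed your reading ($\Psi\equiv 0$, i.e.\ $y$ a deterministic function of $\bx$) contradicts the theorem's other hypothesis: if ${\cal D}(\iX)$ is a continuous density over $\bX=[\bx,y]$, the event $y=\bx^t\bar\bw$ has probability zero, so $\Psi>0$ almost surely. The label noise you discarded is precisely the engine of the paper's proof. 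The paper keeps the symmetric $\pm\Psi$ structure, conditions only on $\Upsilon$, and writes $\Delta(\Upsilon)=2\eta\Upsilon Q(r,\vartheta,\bw_t)-4\eta^2\Upsilon^2\iE[r^2]$, where $Q$ is the difference between the conditional means of $r\cos\vartheta$ on the two sign cases of the residual $\bx^t\bw_t-y=\mp\Upsilon$. At $\bw=\bar\bw$ the symmetry of the noise forces $Q=0$; by continuity of ${\cal D}(\iX)$ there is $\delta>0$ such that $\Vert\bw-\bar\bw\Vert<\delta$ implies $\vert Q\vert<\eta\Upsilon\iE[r^2]$, whence $\Delta(\Upsilon)<-2\eta^2\Upsilon^2\iE[r^2]<0$. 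In words: near the optimum, a gradient step on a point of loss $\Upsilon$ is almost pure overshoot, the first-order benefit cancels by symmetry, and the residual second-order damage grows quadratically in $\Upsilon$. This argument works for every fixed $\eta$, however small.

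Your own computation shows why your route cannot be repaired. Under $\Psi\equiv 0$ you correctly obtain $\Delta(\Upsilon)=4\eta\Upsilon^2\bigl(1-\eta\rho(\Upsilon)\bigr)$, and, as you observe, monotone decrease near $\Upsilon=0$ forces $\eta\,\iE[\Vert\bx\Vert^2\mid\bx^t\iO=0]\ge 1$. But $\eta$ is a given parameter of the algorithm (and the surrounding theory, Theorem~\ref{theorem:2} included, lives in the small-$\eta$ regime), while the only object the theorem allows you to choose is $\bw$; the conditional second moment $\rho$ depends on $\bw$ only through the direction of $\iO$ and is bounded for any fixed continuous distribution with finite conditional second moments. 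Hence for small $\eta$ no choice of direction satisfies your endpoint condition, and under your reading the statement is simply \emph{false} in the regime that matters --- which is exactly why the ``crux'' you flag (upgrading the endpoint condition uniformly in $\Upsilon$) cannot be overcome. The needed fix is conceptual rather than technical: restore the noise, and obtain the reversal of monotonicity by sending $\bw\to\bar\bw$, as the paper does, rather than by trying to inflate $\eta\,\iE[\Vert\bx\Vert^2\mid\cdot\,]$.
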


\begin{proof}
We shift to a hyperspherical coordinate system in $\iR^{d+1}$ similar as before, but now the pole (origin) is fixed at $\bw_t$. For the gradient step $\bs$, it can be shown that:
\begin{equation}
\begin{split}
\bs &= -\sgn{(\bx_i^t\bw_t-y_i)}2\eta\bx_i \Upsilon   \\
\bs_{\OO}&=\bs\cdot\frac{\bar\bw-\bw_t}{\lambda}=\pm \frac{2\eta}{\lambda}r\lambda\cos\vartheta~{\Upsilon}
\end{split}
\end{equation}

Let $\Delta(\Upsilon)$ denote the expected convergence rate at time $t$, given a fixed loss $\Upsilon$. From Lemma~\ref{lemma:2}
\begin{equation*}
\begin{split}
\Delta(\Upsilon) = 2\eta\Upsilon\bigg (&\iE[r\cos\vartheta/_{\displaystyle{\bx_i^t\bw_t-y_i=-\Upsilon}}] - 
\\ &\iE[r\cos\vartheta/_{\displaystyle{\bx_i^t\bw_t-y_i=\Upsilon}}]\bigg ) - \iE[(2\eta r\Upsilon)^2] \\
&\hspace{-15mm}\triangleq 2\eta\Upsilon Q(r,\vartheta,\bw_t) - 4\eta^2\Upsilon^2\iE[r^2]
\end{split}
\end{equation*}

If $\bw=\bar\bw$, then $Q(r,\vartheta,\bw)=0$ from the symmetry of ${\cal D}(\iX)$ with respect to $\Psi$. From the continuity of ${\cal D}(\iX)$, there exists $\delta>0$ such that if $\Vert \bw-\bar\bw\Vert_2<\delta$, then $\Vert Q(r,\vartheta,\bw)-Q(r,\vartheta,\bar\bw)\Vert_2<\eta\Upsilon\iE[r^2]$, which implies that $\Delta(\Upsilon)<-2\eta^2\Upsilon^2\iE[r^2]<0$.
\widowpenalty=10000
\end{proof}

\subsection{Deep learning: simulation results}
\label{sec:theory-simulations}

While the corollaries above apply to a rather simple situation, when using the \emph{Difficulty Score} to guide SGD while minimizing the convex regression loss, their predictions can be empirically tested with the deep learning architecture and loss which are described in Section~\ref{sec:deep-learning}. There an additional challenge is posed by the fact that the empirical ranking is not based on the ideal definition given in Def.~\ref{def:diff_score}, but rather on an estimate derived from another classifier. 

Still, the empirical results as shown in Fig.~\ref{fig:empirical-grad-dist} demonstrate agreement with the theoretical analysis of the linear regression loss. Specifically, in epoch 0 there is a big difference between the average errors in estimating the gradient direction, which is smallest for the easiest examples and highest for the most difficult examples as predicted by Corollary~\ref{cor:1}. This difference in significantly reduced after 10 epochs, and becomes insignificant after 20 epochs, in agreement with Corollary~\ref{cor:2}.

\vspace{-3mm}
\begin{figure}[th!]
	\centering
	\includegraphics[width=0.4\textwidth]{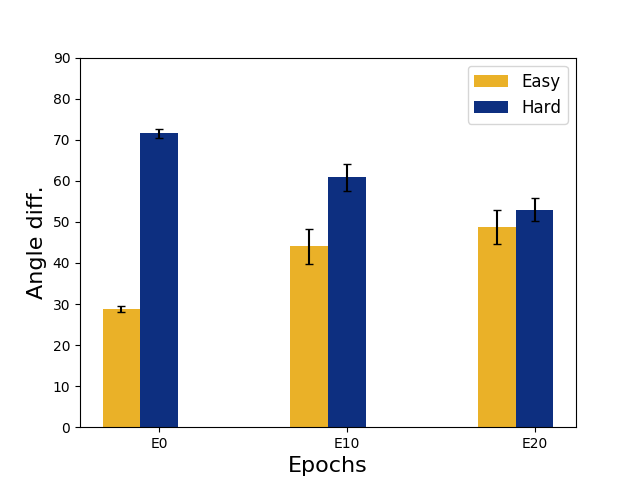}
    \caption{Results using the empirical setup described in Section~\ref{sec:deep-learning}. The average angular difference (in degrees) between the gradient computed based on a batch of 100 examples, and the true gradient based on all the training examples, is shown for 2 cases: the easiest examples (yellow) and the most difficult training examples (blue). Standard error bars are plotted based on 100 repetitions. Three Condition are shown: beginning of training (E0), 10 epochs into training (E10), and 20 epochs into training (E20).
    \label{fig:empirical-grad-dist}}
\end{figure}

\paragraph{Discussion.}

Fig.~\ref{fig:empirical-grad-dist} shows that the variance in the direction of the gradient step defined by easier points is significantly smaller than that defined by difficult points, especially at the beginning of training. This is advantageous when the initial point $\bw_0$ does not lie in the basin of attraction of the desired global minimum $\bar\bw$, and if, in agreement with Lemma~\ref{lemma:1}, the pronounced shared component of the easy gradient steps points in the direction of the global minimum, or a more favorable local minimum; then the likelihood of escaping the local minimum decreases with a point's \emph{Difficulty Score}. This scenario suggests another possible advantage for curriculum learning at the initial stages of training.

\section{Curriculum learning in deep networks}
\label{sec:deep-learning}

As discussed in the introduction, a practical curriculum learning method should address two main questions: how to rank the training examples, and how to modify the sampling procedure based on this ranking. Solutions to these issues are discussed in Section~\ref{sec:algorithm}. In Section~\ref{sec:empirical} we discuss the empirical evaluation of our method.

\subsection{Method}
\label{sec:algorithm}

\subsubsection*{Ranking examples by knowledge transfer}

The main novelty of our proposed method lies in this step, where we rank the training examples by estimated difficulty in the absence of human supervision. Difficulty is estimated based on knowledge transfer from another classifier. Here we investigate transfer from a more powerful learner.

It is a common practice now to treat one of the upstream layers of a pre-trained network as a representation (or embedding) layer. This layer activation is then used for representing similar objects and train a simpler classifier (such as SVM, or shallower NNs) to perform a different task, related but not identical to the original task the network had been trained on. In computer vision such embeddings are commonly obtained by training a deep network on the recognition of a very large database such as ImageNet \citep{imagenet_cvpr09}. These embeddings have been shown to provide better semantic representations of images (as compared to more traditional image features) in a number of related tasks, including the classification of small datasets \citep{sharif2014cnn}, image annotation \citep{donahue2015long} and structured predictions \citep{hu2016learning}. 

Following  this practice, the activation in the penultimate layer of a large and powerful pre-trained network is the loci of knowledge transfer from one network to another. Repeatedly, as in \citep{sharif2014cnn}, it has been shown that competitive performance can be obtained by training a shallow classifier on this representation in a new related task. Here we propose to use the confidence of such a classifier, e.g. the margin of an SVM classifier, as the estimator for the difficulty of each training example. This measure is then used to sort the training data. We note that unlike the traditional practice of reusing a pre-trained network, here we only transfer information from one learner to another. The goal is to achieve a smaller classifier that can conceivably be used with simpler hardware, without depending on access to the powerful learner at test time.


\subsubsection*{Scheduling the appearance of training examples}

In agreement with prior art, e.g. the definition of curriculum in \cite{bengio2009curriculum}, we investigate curriculum learning where the scheduling of examples changes with time, giving priority to easier examples at the beginning of training. We explored two variants of the basic scheduling idea:

\textbf{Fixed.}
The distribution used to sample examples from the training data is gradually changed in fixed steps. Initially all the weight is put on the easiest examples. In subsequent steps the weight of more difficult examples is gradually increased, until the final step in which the training data is sampled uniformly (or based on some prior distribution on the training set).

\textbf{Adaptive.}
Similar to the previous mode, but where the length of each step is not fixed, but is being determined adaptively based on the current loss of the training data.

\subsection{Empirical evaluation}
\label{sec:empirical}

\subsubsection*{Experimental setup}

\textbf{Datasets.}
For evaluation we used 2 data sets: CIFAR-100 \citep{krizhevsky2009learning} and STL-10 \citep{coates2010analysis}. In all cases, as is commonly done, the data was pre-processed using global contrast normalization; cropping and flipping were used for STL-10.

\textbf{Network architecture.}
We used convolutional Neural Networks (CNN) which excel at image classification tasks. Specifically, we used two architectures which are henceforth denoted \textit{Large} and \textit{Small}, in accordance with the number of parameters. The \textit{Large} network is comprised of four blocks, each with two convolutional layers, ELU activation, and max-pooling. This is followed by a fully connected layer, for a total of 1,208,101 parameters. The \textit{Small} network consists of only three hidden layers, for a total of 4,557 parameters. During training, we applied dropout and $l2$ regularization on the weights, and used either SGD or ADAM to optimize the cross-entropy loss. 

\textbf{Scheduling mechanisms: control.}
As described above, our method is based on a scheduling design which favors the presentation of easier examples at the beginning of training. In order to isolate the contribution of scheduling by increasing level of difficulty as against other spurious consequences of data scheduling, we compared performance with the following control conditions: \texttt{control-curriculum}, identical scheduling mechanism but where the underlying ranking of the training examples is random and unrelated to estimated difficulty; and \texttt{anti-curriculum}, identical scheduling mechanism but favoring the more difficult examples at the beginning of training.

\subsubsection*{Controlling for Task difficulty}
\label{sec:difficulty}

\begin{figure*}[ht!]
	\centering
	\includegraphics[width=0.345\textwidth]{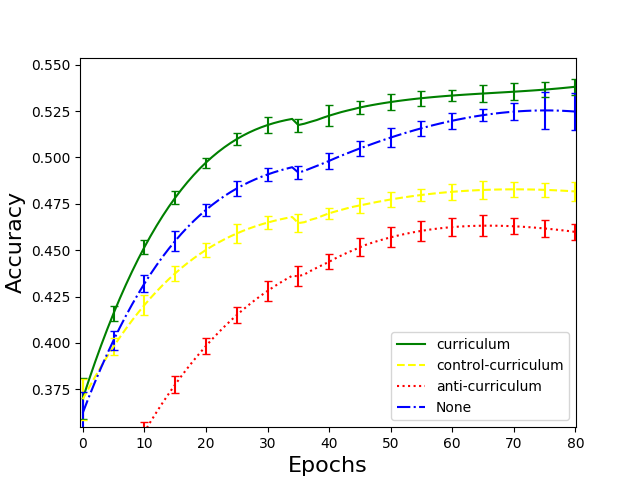}   
	\includegraphics[width=0.31\textwidth]{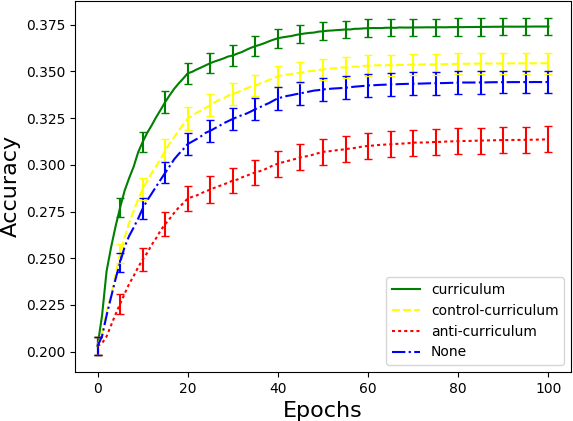}  \hspace{2mm}
	\includegraphics[width=0.31\textwidth]{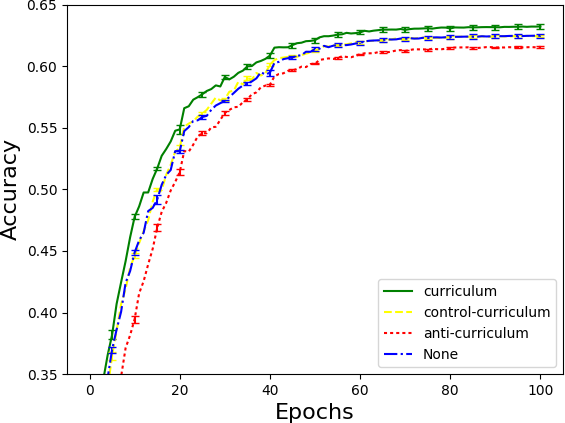}\\
    a) \hspace{0.32\textwidth} b) \hspace{0.32\textwidth} c)
    \caption{Accuracy in test classification as a function of training time. We show results with 4 scheduling methods: our method denoted \texttt{curriculum} (solid green), \texttt{control-curriculum} with random ordering (dashed yellow), \texttt{anti-curriculum} where more difficult examples are preferred at the beginning of training (dotted red), and \texttt{none} with no curriculum learning (dashdotted blue). a-b) Learning CIFAR100 task 1, where the \emph{Large} network is used in a) and the \emph{Small} network in b). c) Learning to classify STL-10 images.
    \label{fig:CIFAR100}}
\end{figure*}

Evidence from prior art is conflicting regarding where the benefits of curriculum learning lie, which is to be expected given the variability in the unknown sources of the curriculum supervision information and its quality. We observed in our empirical study that the benefits depended to a large extent on the difficulty of the task. We always saw faster learning at the beginning of the training process, while lower generalization error was seen only when the task was relatively difficult. We therefore employed controls for the following 3 sources of task difficulty:

\textbf{Inherent task difficulty.}
To investigate this factor, we take advantage of the fact that CIFAR-100 is a hierarchical dataset with 100 classes and 20 super-classes, each including 5 member classes. We therefore trained a network to discriminate the 5 member classes of 2 super-classes as 2 separate tasks: `small mammals' (task 1) and `aquatic mammals' (task 2). These are expected to be relatively hard learning tasks. We also trained a network to discriminate 5 random well separated classes: `camel', `clock', `bus', `dolphin' and `orchid' (task 3). This task is expected to be relatively easy.

\textbf{Size of classification network.} 
For a given task, classification performance is significantly affected by the size of the network and its architecture. We assume, of course, that we operate in the domain where the number of model parameters is smaller than can be justified by the training data (i.e., there is no overfit). We therefore used networks of different sizes in order to evaluate how curriculum learning is affected by \emph{task difficulty} as determined by the network's strength (see Fig.~\ref{fig:CIFAR100}a-b). In this comparative evaluation, the smaller the network is, the more difficult the task is likely to be (clearly, many other factors participate in the determination of task difficulty).

\textbf{Regularization and optimization.}
Regularization is used to constrain the family of hypotheses, or models, so that they possess such desirable properties as smoothness. Regularization effectively decreases the number of degrees of freedom in the model. In fact, most optimization methods, other then vanilla stochastic gradient descent, incorporate some form of regularization and smoothing, among other inherent properties. Therefore the selection of optimization method also plays a role in determining the effective size of the final network. 

\subsubsection*{Results}

Fig.~\ref{fig:CIFAR100}a shows typical results when training the  \textit{Large} CNN (see network's details above) to classify a subset of 5 CIFAR100 images (task 1 as defined above), using slow learning rate and Adam optimization. In this setup we see that curriculum learning speeds up the learning rate at the beginning of the training, but converges to the same performance as regular training. When we make learning more difficulty by using the  \textit{Small} network, performance naturally decreases, but now we see that curriculum learning also improves the final generalization performance (Fig.~\ref{fig:CIFAR100}b). Similar results are shown for the STL-10 dataset (Fig.~\ref{fig:CIFAR100}c).

\begin{figure}[th!]
	\centering
	\includegraphics[width=0.42\textwidth]{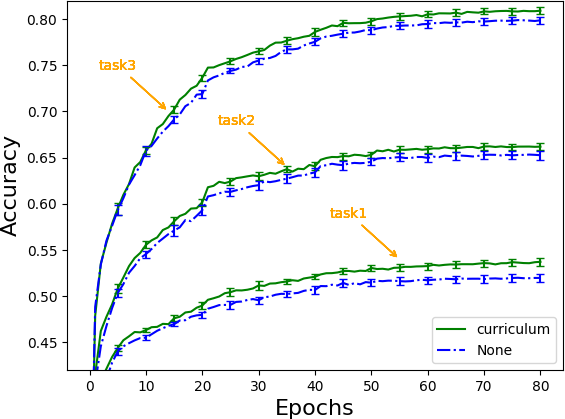}\\
    \caption{Accuracy in test classification in 3 tasks each involving 5 classes from the CIFAR100 datasets (see Section~\ref{sec:difficulty}). In each task, the performance of learning with a curriculum (solid green) and without (dashdotted blue) is shown.}
    \label{fig:CIFAR-subsets}
\end{figure}

\comment{ 
\begin{figure*}[t!]
	\centering
	\includegraphics[width=0.32\textwidth]{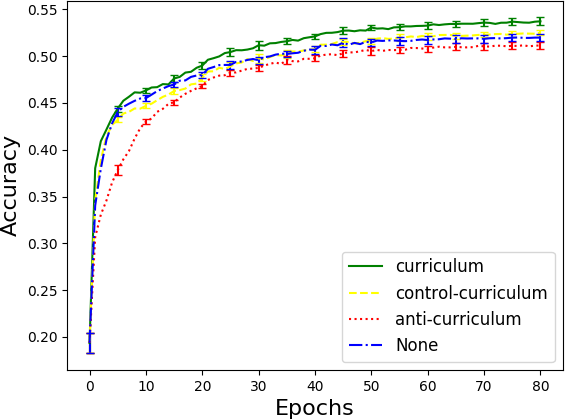}
	\includegraphics[width=0.32\textwidth]{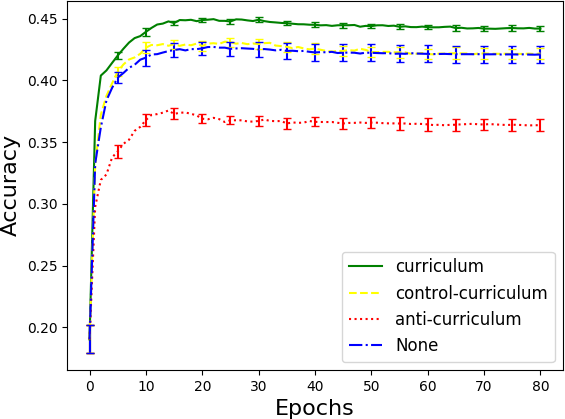}
	\includegraphics[width=0.32\textwidth]{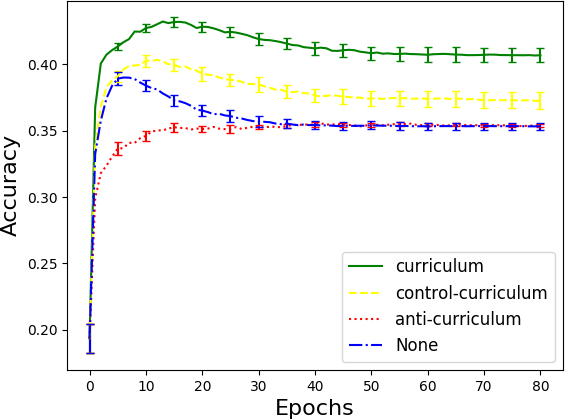}
    \\
    a) \hspace{0.32\textwidth} b) \hspace{0.32\textwidth} c)
    \caption{Accuracy in test classification as a function of training time, as seen in the learning of task 1. Panels are organized by level of regularization, from low (a) to high and most detrimental (c). 4 scheduling methods are shown as described in the caption of Fig.~\ref{fig:CIFAR100}.}
    \label{fig:CIFAR-subsets-reg}
\end{figure*}
}

Fig.~\ref{fig:CIFAR-subsets} shows comparative results when controlling for inherent task difficulty in the 3 tasks described above, using faster learning rate and SGD optimization. Task difficulty can be evaluated in retrospect from the final performance seen in each plot. As can be clearly seen in the figure, the improvement in final accuracy with curriculum learning is larger when the task is more difficult.
When manipulating the level of regularization, we see that while too much regularization always harms performance, curriculum learning is least affected by this degradation (results are omitted).

\section{Summary and Discussion}

We investigated curriculum learning, an extension  of stochastic gradient descent in which easy examples are more frequently sampled at the beginning of training. We started with the theoretical investigation of this strict definition in the context of linear regression, showing that curriculum learning accelerates the learning rate in agreement with prior empirical evidence. While not shedding light on its affect on the classifier's final performance, our analysis suggests that the direction of a gradient step based on "easy" examples may be more effective in traversing the input space towards the ideal minimum of the loss function. Specifically, we have empirically shown that the variance in the gradient direction of points increases with their difficulty when optimizing a non-convex loss function. Over-sampling the more coherent easier examples may therefore increase the likelihood to escape the basin of attraction of a low quality local minimum in favor of higher quality local minima even in the general non-convex case. 

We also showed theoretically that when the difficulty score of the training points is fixed, convergence is faster if the loss with respect to the current hypothesis is higher. This seems to be a very intuitive result, an intuition that underlies the boosting method for example. However, as intuitive as it might be, this is not always true when the prior data density is assumed to be continuous and when the optimal hypothesis is realizable. Thus the requirement that the difficulty score is fixed is necessary.

In the second part of this paper we described a curriculum learning method for deep networks. The method relies on knowledge transfer from other (pre-trained) networks in order to rank the training examples by difficulty. We described extensive experiments where we evaluated our proposed method under different task difficulty conditions and against a variety of control conditions. In all cases curriculum learning has been shown to increase the rate of convergence at the beginning of training, in agreement with the theoretical results. With more difficult tasks, curriculum learning improved generalization performance.

\section*{Acknowledgements}

This work was supported in part by a grant from the Israel Science Foundation (ISF) and by the Gatsby Charitable Foundations.

\bibliography{bib}
\bibliographystyle{icml2018}

\comment{not needed at the moment
\appendix

\section*{Appendix}

\section{Details for omitted proofs}
\label{app:proofs}

Lemma~\ref{lemma:2}:
\begin{equation*}
\Delta(\Psi) = 2\lambda\iE[\bs_{\OO}/_{\displaystyle{\Psi}}]-\iE[\bs^2/_{\displaystyle{\Psi}}]
\end{equation*}

\begin{proof}
From (\ref{eq:delta})
\begin{align*}
\iE[\Delta] &= \lambda^2-\iE[(\lambda-\bs_{\OO})^2+\bs_{\perp}^2] \\
&= \lambda^2-(\lambda^2 -2\lambda\iE[\bs_{\OO}]+\iE[\bs_{\OO}^2]) - \iE[\bs_{\perp}^2]  \\
&= 2\lambda\iE[\bs_{\OO}]-\iE[\bs^2]
\end{align*}
\widowpenalty=10000
\end{proof}

Lemma~\ref{lemma:4}. The relation between $\Upsilon, \Psi, r, \vartheta$ can be written separately in 4 regions as follows (see Fig.~\ref{fig:obtuse}):
\begin{enumerate}[{A}1]
\item
$0\le\vartheta\le\pi-\beta,~y_i=\bx_i^t\bar\bw+{\Psi},~y_i=\bx_i^t\bw_t+{\Upsilon}\implies \lambda r \cos\vartheta=\bx_i^t(\bar\bw-\bw_t)=-\Psi+\Upsilon$
\item
$\pi-\beta\le\vartheta\le\pi,~y_i=\bx_i^t\bar\bw+{\Psi},~y_i=\bx_i^t\bw_t-{\Upsilon}\implies \lambda r \cos\vartheta=-{\Psi}-{\Upsilon}$
\item
$0\le\vartheta\le\beta,~y_i=\bx_i^t\bar\bw-{\Psi},~y_i=\bx_i^t\bw_t+{\Upsilon}\implies \lambda r \cos\vartheta={\Psi}+{\Upsilon}$
\item
$\beta\le\vartheta\le\pi,~y_i=\bx_i^t\bar\bw-{\Psi},~y_i=\bx_i^t\bw_t-{\Upsilon}\implies \lambda r \cos\vartheta={\Psi}-{\Upsilon}$
\end{enumerate}

\begin{proof}
We keep in mind that $\forall \bx_i$ and $\Psi$, there are 2 possible $y_i$ with equal probability. Recall that $\bar\bz$ denotes the projection of $\bar\bw$ on $\Omega_i$. In the planar section shown in Fig.~\ref{fig:obtuse},
\begin{description}
\item
$\bar\bz$ lies in the upper half space $\iff$ $y_i=\bx_i^t\bar\bw+{\Psi}$
\item
$\bar\bz$ lies in the lower half space  $\iff$ $y_i=\bx_i^t\bar\bw-{\Psi}$
\end{description}
This follows from 3 observations: $\bar\bx_i$ lies in the upper half space by the definition of the polar coordinate system, $\bx_i^t\bar\bw-y_i=\pm\Psi$, and
\begin{equation*}
0 = \bx_i^t\bar\bz - y_i= \bx_i^t (\bar\bz-\bar\bw)  +\bx_i^t\bar\bw-y_i
\end{equation*}

Next, let $\bz_t$ denote the projection of $\bw_t$ on $\Omega_i$. Then
\begin{equation*}
0 = \bx_i^t\bz_t - y_i= \bx_i^t (\bz_t-\bw_t)  +\bx_i^t\bw_t-y_i
\end{equation*}
When $\bar\bz$ lies in the upper half space, the following can be verified geometrically from Fig.~\ref{fig:obtuse}:
\begin{description}
\item
$0\le\vartheta\le\pi-\beta ~\Rightarrow~ \bx_i^t (\bz_t-\bw_t)\ge 0 ~\Rightarrow~ y_i=\bx_i^t\bw_t+{\Upsilon}$
\item
$\pi-\beta\le\vartheta\le\pi~\Rightarrow~\bx_i^t (\bz_t-\bw_t)\le 0 ~\Rightarrow~ y_i=\bx_i^t\bw_t-{\Upsilon}$
\end{description}
When $\bar\bz$ lies in the lower half space
\begin{description}
\item
$0\le\vartheta\le\beta \implies \bx_i^t (\bz_t-\bw_t)\ge 0 \implies y_i=\bx_i^t\bw_t+{\Upsilon}$
\item
$\beta\le\vartheta\le\pi\implies \bx_i^t (\bz_t-\bw_t)\le 0 \implies y_i=\bx_i^t\bw_t-{\Upsilon}$
\end{description}

\widowpenalty=10000
\end{proof}
}

\end{document}